\title{No-Regret Strategy Solving in Imperfect-Information Games via Pre-Trained Embedding}
\author{
    Yanchang Fu\textsuperscript{\rm 1,2}\quad
    Shengda Liu\textsuperscript{\rm 2}\quad
    Pei Xu\textsuperscript{\rm 2}\quad
    Kaiqi Huang\textsuperscript{\rm 2}\thanks{Corresponding author.}
}
\theoremstyle{definition}
\newtheorem{definition}{Definition}[section]
\newtheorem{lemma}{Lemma}
\newtheorem{proposition}{Proposition}
\begin{document}

\maketitle

\begin{abstract}
High-quality information set abstraction remains a core challenge in solving large-scale imperfect-information extensive-form games (IIEFGs)--such as no-limit Texas Hold’em--where the finite nature of spatial resources hinders solving strategies for the full game. State-of-the-art AI methods rely on pre-trained discrete clustering for abstraction, yet their hard classification irreversibly discards critical information: specifically, the quantifiable subtle differences between information sets—vital for strategy solving—thus compromising the quality of such solving. Inspired by the word embedding paradigm in natural language processing, this paper proposes the Embedding CFR algorithm, a novel approach for solving strategies in IIEFGs within an embedding space. The algorithm pre-trains and embeds the features of individual information sets into an interconnected low-dimensional continuous space, where the resulting vectors more precisely capture both the distinctions and connections between information sets. Embedding CFR introduces a strategy-solving process driven by regret accumulation and strategy updates in this embedding space, with supporting theoretical analysis verifying its ability to reduce cumulative regret. Experiments on poker show that with the same spatial overhead, Embedding CFR achieves significantly faster exploitability convergence compared to cluster-based abstraction algorithms, confirming its effectiveness. Furthermore, to our knowledge, it is the first algorithm in poker AI that pre-trains information set abstractions via low-dimensional embedding for strategy solving.
\end{abstract}

\begin{links}
    \link{Code}{https://github.com/PhilEnchan/EmbeddingCFR}
\end{links}

\section{Introduction}

Solving large-scale imperfect-information games like no-limit Texas Hold'em remains a pivotal challenge in AI research. Systems including DeepStack~\cite{moravvcik2017deepstack}, Libratus~\cite{brown2018superhuman}, and Pluribus~\cite{brown2019superhuman} have achieved superhuman performance through game-theoretic equilibrium computation, yet their success hinges on addressing a critical dilemma: handling enormous decision spaces with limited spatial resources.

Counterfactual Regret Minimization (CFR) serves as the foundational algorithm for computing \(\epsilon\)-Nash equilibrium, but its linear space complexity becomes prohibitive even for moderate-scale games. Consider heads-up limit Texas Hold'em: with \(\approx 10^{13}\) information sets, it demands 523 terabytes of RAM~\cite{johanson2013measuring}—a scale rendering direct CFR implementation physically impossible, let alone larger games like heads-up no-limit. Existing systems thus trade theoretical rigor for practicality via approximations, with the prevailing ``abstraction-solving-translation" paradigm~\cite{gilpin2007potential} as the core approach: compressing the original game into a feasible abstract version, solving for \(\epsilon\)-Nash equilibrium within it, and mapping the solved strategy back to the original game. One key approach within this paradigm, known as information set abstraction, is a pre-training method that groups similar information sets into discrete equivalence classes prior to strategy solving, and uniform strategies are applied to these classes to reduce spatial resource demands. This approach is validated by the aforementioned superhuman AI systems, which leveraged it to defeat top human players.

However, existing information set abstraction algorithms have critical limitations. Relying predominantly on many-to-one mappings, they often introduce arbitrary classification boundaries. As shown in Figure~\ref{fig-motivation}(c), consider two hands in the flop round of Texas Hold'em (see Appendix 
\ref{apdx:heads_up_texas}
for rules)\footnote{All appendix content is available in the extended version.}: hand \(\blacksquare\) (\(\texttt{9}^p_s \texttt{A}^p_s \texttt{9}_h \texttt{J}_s \texttt{Q}_s\))\footnote{h: hearts, d: diamonds, s: spades, c: clubs; superscripts \(^p\) denote private cards (e.g., \(\texttt{8}^p_h \texttt{8}^p_d\)= player's 8s in hearts/diamonds).} and hand $\scalebox{1.9}{\raisebox{-0.15ex}{\text{$\bullet$}}}$ (\(\texttt{T}^p_s \texttt{A}^p_s \texttt{T}_h \texttt{J}_s \texttt{Q}_s\)); both hands forming a pair. While superficially similar, hand $\scalebox{1.9}{\raisebox{-0.15ex}{\text{$\bullet$}}}$ can form a straight flush with \(\texttt{K}_s\), making it inherently stronger than hand \(\blacksquare\) (which can realistically only form a flush with any additional spade card). Current algorithms face a dilemma: keeping these hands--and the information sets they represent--separate wastes the similar strategies that would naturally follow from their inherent similarities, while grouping them overlooks nuanced differences. This binary, low-granularity approach introduces arbitrariness that degrades strategy quality.
\begin{figure}[t]
\centering
\includegraphics[width=0.89\columnwidth]{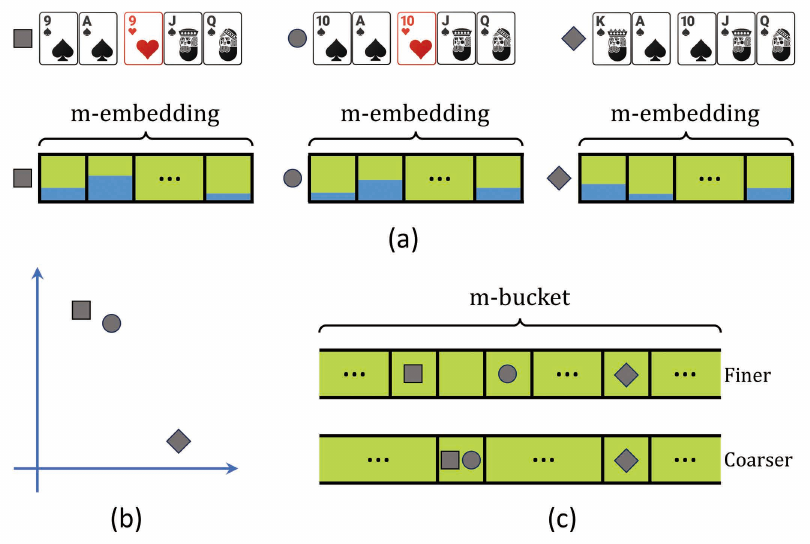} 
\caption{
\textbf{Behavior comparison of hand representations under Embedding CFR and traditional information set abstraction} for hands $\blacksquare$, $\scalebox{1.9}{\raisebox{-0.15ex}{\text{$\bullet$}}}$, and $\scalebox{0.8}{\rotatebox{45}{\text{$\blacksquare$}}}$ in Texas Hold'em. \\
    \textbf{(a)} Embedding CFR maps hands to embedding coordinates, which form an m-dimensional probability distribution where the sum of values across all dimensions equals 1. \\
    \textbf{(b)} Schematic 2D projection of embedding coordinates illustrates the geometric topology between hands, highlighting both similarity (closeness of $\blacksquare$ and $\scalebox{1.9}{\raisebox{-0.15ex}{\text{$\bullet$}}}$) and distinction (separation from $\scalebox{0.8}{\rotatebox{45}{\text{$\blacksquare$}}}$). \\
    \textbf{(c)} Traditional abstraction maps information sets to a fixed number of $m$ abstracted classes, e.g. buckets, forcing binary decisions for these hands: either refining $\blacksquare$ and $\scalebox{1.9}{\raisebox{-0.15ex}{\text{$\bullet$}}}$ into distinct equivalence classes or coarsening them into one. This lack of intermediate states hinders exploitation of inter-information-set similarity for strategy solving.
}
\label{fig-motivation}
\end{figure}

In this paper, we propose \textbf{Embedding CFR}, a novel strategy-solving paradigm for information set abstraction, to address the aforementioned limitations. Unlike traditional methods that map multiple information sets to a single equivalence class, it maps each information set to a multi-dimensional vector—embedding coordinates forming a probability distribution--thereby enhancing abstraction expressiveness. As Figure~\ref{fig-motivation}(a) illustrates, hands \(\blacksquare\), $\scalebox{1.9}{\raisebox{-0.15ex}{\text{$\bullet$}}}$, and $\scalebox{0.8}{\rotatebox{45}{\text{$\blacksquare$}}}$ (the latter a strong straight flush: \(\texttt{K}^p_s \texttt{A}^p_s \texttt{T}_s \texttt{J}_s \texttt{Q}_s\) at the flop) are assigned embedding coordinates representing their confidence in corresponding dimensions. During strategy solving, updates to information sets preferentially influence high-confidence dimensions, ensuring strategies associated with closer coordinates exhibit greater similarity. Figure~\ref{fig-motivation}(b) shows a schematic 2D projection of this embedding, where hand \(\blacksquare\) and hand $\scalebox{1.9}{\raisebox{-0.15ex}{\text{$\bullet$}}}$ cluster closely yet remain subtly distinguishable--yielding correspondingly similar solved strategies--while being distinctly separated from the strong hand $\scalebox{0.8}{\rotatebox{45}{\text{$\blacksquare$}}}$, ensuring minimal mutual influence during solving.

This paper makes the following contributions:
\begin{enumerate}
    \item To our knowledge, at least in poker AI development, ours is the first work to employ a pre-trained embedding approach for information set abstraction.
    \item We present a framework for solving strategies using no-regret optimization under the condition of information set embedding, and provide an approximate analysis of its ability to achieve regret decrease.
    \item We successfully apply Embedding CFR to poker AI development and propose an embedding construction algorithm for poker.
\end{enumerate}

Experiments are conducted in a poker game, comparing with traditional information set abstraction algorithms such as EHS~\cite{gilpin2007better}, PaEmd (the information set abstraction algorithm adopted by Libratus, recognized as a state-of-the-art approach in the field)~\cite{ganzfried2014potential}, and a recent work KrwEmd~\cite{fu2024signal}. The results show that the strategy solved by Embedding CFR has lower exploitability under the same spatial overhead and update iterations, demonstrating the effectiveness of the proposed algorithm.

\subsection{Related Works}

Our work lies within the community of solving imperfect-information extensive-form games via no-regret learning, with CFR~\cite{zinkevich2007regret} as the core strategy-solving algorithm. Extensions follow two distinct directions: sampling-based methods like MCCFR~\cite{lanctot2009monte}, and strategy update-focused optimizations including CFR+~\cite{bowling2015heads}, DCFR~\cite{brown2019solving}, and PCFR+~\cite{farina2021faster}.

For strategy solving under limited space constraints, related efforts include pruning and action abstraction techniques: regret-based pruning \cite{brown2015regret}, best-response pruning \cite{brown2017reduced}, dynamic thresholding pruning \cite{brown2017dynamic}, along with reinforcement learning-based online action abstraction methods like RL-CFR \cite{li2024rl} and EVPA \cite{li2025efficient}.

Closer to our approach, neural network-based regret estimation methods (DeepStack \cite{moravvcik2017deepstack}, ReBeL~\cite{brown2020combining}, Deep CFR~\cite{brown2019deep}, Escher~\cite{mcaleer2023escher}) replace tabular storage of regrets and strategies to simplify solving. They excel in model-free scenarios, where reliance on environmental interactions--which simultaneously drive strategy optimization and information set learning--reduces dependence on complex domain knowledge. In model-known settings, however, this dual reliance creates trade-offs, blunting efficiency compared to pre-training-based methods leveraging prior domain knowledge.

Our closest predecessors are information set abstraction works: lossless abstraction applicable to small-scale games \cite{gilpin2007lossless}, expectation-based EHS \cite{gilpin2007better}, potential-aware methods \cite{gilpin2007potential, ganzfried2014potential} (notably PaEmd), and the history trajectory-based KrwEmd \cite{fu2024signal}. We share with these predecessors a pre-training-then-solving paradigm--analogous to NLP pipelines where word embedding or tokenization precedes semantic processing--by first preprocessing information sets before strategy solving.

\section{Background and Notation}
\label{sec:background}

An imperfect-information extensive-form game (IIEFG) is defined by the tuple $(\mathcal{N}, H, A, \mathcal{P}, u, \sigma_c, \mathcal{I})$. The set $\mathcal{N} = \{1, \dots, N\} \cup \{c\}$ represents the finite set of players; in this work, we focus on the two-player case where $N = 2$, with $c$ being a special \textbf{chance} player whose actions model stochastic events. The set $H$ consists of histories (also referred to as nodes), where each $h \in H$ represents a sequence of actions from the game's start, with the empty sequence $h^0$ denoting the unique initial history. We write $h \sqsubseteq h'$ if $h$ is a prefix of $h'$, and $h \sqsubset h'$ if it is a strict prefix; $h \cdot a$ denotes the history formed by appending action $a \in A(h)$ to $h$. Here, $A(h)$ is the set of available actions for non-terminal histories $h\in H\setminus Z$, with $Z \subset H$ being the set of terminal histories that end the game. The player function $\mathcal{P}: H \setminus Z \to \mathcal{N}$ assigns a unique acting player to each non-terminal history. The utility function $u = (u_1, \dots, u_N)$ gives each player $i \in \mathcal{N}$ a real-valued payoff $u_i(z)$ for every terminal history $z \in Z$. In the two-player zero-sum setting considered here, $u_1(z) = -u_2(z)$ for all $z \in Z$. The function $\sigma_c$ models the chance player's behavior by specifying, for each history $h$ where $\mathcal{P}(h) = c$, a probability distribution $\sigma_c(h, a)$ over actions $a \in A(h)$, capturing the game's stochastic nature. 

A central concept in imperfect-information extensive-form games is the \textbf{information set (infoset)}, which captures the inherent uncertainty a player faces regarding the exact game history when making a decision. Formally, $\mathcal{I} = \bigcup_{i \in \mathcal{N} \setminus \{c\}} \mathcal{I}_i$ represents the collection of infosets, where each $\mathcal{I}_i$ is a partition of the set $H_i = \{h \in H \setminus Z \mid \mathcal{P}(h) = i\}.$ That is, each history in which player $i$ takes an action belongs to exactly one infoset. Within any $I \in \mathcal{I}_i$, the player cannot distinguish between the histories $h, h' \in I$, implying the following consistency conditions:
\[
\forall h, h' \in I: \begin{cases}\mathcal{P}(h) = \mathcal{P}(h') = i=\mathcal{P}(I),\\ A(h) = A(h')=A(I).\end{cases}
\]
For infoset $I$, we define $Z_I$ as the set of terminal histories that can be reached from some $h \in I$. Throughout this work, we assume the game adheres to the property of \textbf{perfect recall}, which stipulates that players retain complete knowledge of all information they have previously encountered. More formally, for player~$i$, if two different histories are not part of the same infoset, then no subsequent continuations of these histories can be grouped within the same infoset for that player. An important implication of this property is that each branch from the root to the leaves of the game tree traverses any given infoset at most once. We use the notation \(z[I]\) to denote the history prefix of a terminal history $z$ that passes through infoset $I$. This notation is undefined (or invalid) if the path from \(h^0\) to $z$ does not traverse $I$. 

A player's strategy $\sigma_i \in \Sigma_i$ assigns a probability $\sigma_i(I, a)$ to each action $a$ at every infoset $I \in \mathcal{I}_i$. A strategy profile $\sigma = \langle\sigma_1, \dots, \sigma_N\rangle$ specifies a complete strategy for each player. Given $\sigma$, the reach probability of a history $h \in H$ is defined as $\pi^\sigma(h) = \pi^\sigma(h^0, h)$, where $\pi^\sigma(h, h') = \prod_{i \in \mathcal{N}} \pi_i^\sigma(h, h')$, with 
\[
\pi_i^\sigma(h, h') = 
\begin{cases}
0, &\!\!\!\text{if } h \not\sqsubseteq h' \\
\prod\limits_{h'':h \sqsubseteq h'', h'' \cdot a \sqsubseteq h', \mathcal{P}(h'') = i} \!\!\!\sigma_i(h'', a), & \!\!\!\text{if } h \sqsubseteq h'
\end{cases}.
\]
The contribution of player~$j$ to the reach probability of history~$h$ is denoted by $\pi_i^\sigma(h) = \pi_i^\sigma(h^0, h)$. In later sections, we may slightly abuse notation by writing $\sigma_i(h, a)$ to denote the probability assigned to action~$a$ at the infoset containing~$h$. We similarly use the subscript $-i$ in both $\pi_{-i}^\sigma$ and $\sigma_{-i}$ to refer to the reach probability and strategy profile, respectively, of all players except player~$i$.

In game theory, the \textbf{Nash equilibrium} is a crucial solution concept: a state where no player can enhance their payoff by unilaterally altering their strategy, and to some extent the optimal solution. The payoff for player $i$ under strategy profile $\sigma$ is $u_i(\sigma) = \sum_{z \in Z} u_i(z) \pi^\sigma(z)$. The \textbf{best response} value for player $i$ against $\sigma_{-i}$ is $b_i(\sigma_{-i}) = \max_{\sigma_i' \in \Sigma_i} u_i(\sigma_i', \sigma_{-i})$, the max payoff for optimal response. A strategy profile $\sigma^*$ is an \textbf{$\epsilon$-Nash equilibrium} if $\forall i\in \mathcal{N}\backslash \{c\}$, $u_i(\sigma^*) + \epsilon \geq b_i(\sigma^*_{-i})$. When $\epsilon = 0$, it's a standard Nash equilibrium. In two-player zero-sum games, the \textbf{exploitability} of $\sigma$ is $\epsilon^\sigma = b_1(\sigma_2) + b_2(\sigma_1)$, measuring players' vulnerability to an optimal opponent. Lower exploitability means closer to equilibrium.

\subsection{Counterfactual Regret Minimization}
\label{subsec:cfr}

Counterfactual Regret Minimization (CFR), also referred to as Vanilla CFR, is an iterative algorithm for computing approximate Nash equilibrium in IIEFGs by minimizing \textbf{counterfactual regret}~\cite{zinkevich2007regret}. For player \(i\), the total regret after \(T\) iterations is defined as
\[
R^T_i = \frac{1}{T}\max_{\sigma_i'\in \Sigma_i}\sum_{t=1}^T\left(u_i(\sigma_i',\sigma^t_{-i}) - u_i(\sigma^t)\right).
\]
In two-player zero-sum IIEFGs, if both players satisfy \(R^T_i \leq \epsilon\), then the average strategy forms a \(2\epsilon\)-Nash equilibrium~\cite{waugh2009abstraction}.

CFR avoids the difficulty of directly minimizing global regret by instead minimizing cumulative counterfactual regret at each infoset. It defines the \textbf{counterfactual value}:
\begin{equation*}
v_i^\sigma(I) = \sum_{z \in Z_I} \pi_{-i}^\sigma(z[I]) \pi^\sigma(z[I], z) u_i(z),
\end{equation*}
and the \textbf{immediate counterfactual regret}:
\begin{equation}
r^T(I,a) = v_{\mathcal{P}(I)}^{\sigma^T_{|I\to a}}(I) - v_{\mathcal{P}(I)}^{\sigma^T}(I),
\label{eq:immediate_regret}
\end{equation}
where \(\sigma_{|I\to a}\) is a strategy profile identical to \(\sigma\) except that player \(\mathcal{P}(I)\) always plays action \(a\) at \(I\).

Then, the \textbf{cumulative counterfactual regret} 
\begin{equation}
R^T(I,a) = \frac{1}{T}\sum_{t=1}^T r^t(I,a)    \label{eq:cumulative_regret}
\end{equation} is used to update the immediate strategy via \textbf{regret matching}~\cite{hart2000simple}:
\begin{align*}
\sigma^{T+1}_{\mathcal{P}(I)}(I, a) = 
\begin{cases}
\frac{R^T(I,a)_+}{~~\sum\limits_{\mathclap{a'\in A(I)}} \left(R^T(I,a')_+\right)}, & \text{if } \sum\limits_{\mathclap{a'\in A(I)}} \left(R^T(I,a')_+\right) > 0, \\
\frac{1}{|A(I)|}, & \text{otherwise,}
\end{cases}
\end{align*}
where $x_+= \max(x,0)$.

This process causes \(\sum_{I\in \mathcal{I}_i}(\max_{a\in A(I)} R^T(I,a)_+)\) to converge to zero at a rate of \(\mathcal{O}(1/\sqrt{T})\), which is an upper bound of \(R_i^T\). Therefore, the average strategy profile \(\bar{\sigma}^T = \langle \bar{\sigma}_1^T, \bar{\sigma}_2^T \rangle\) converges to an \(\epsilon\)-Nash equilibrium, where
\begin{equation*}
\bar{\sigma}_{i=\mathcal{P}(I)}^T(I, a) = \frac{\sum_{t = 1}^T \pi_i^{\sigma^t}(I)\sigma_i^t(I, a)}{\sum_{t = 1}^T \pi_i^{\sigma^t}(I)}. 
\end{equation*}

\section{Embedding CFR}

This section introduces the Embedding CFR algorithm, a strategy-solving approach that leverages embedding for information set abstraction. We first extend the definition of information set abstraction (Section \ref{sec:infoset_abstraction_reformulation}), then detail the algorithm's driving process (Section \ref{sec:driving_process}), and finally analyze its regret convergence trends in low-dimensional spaces under restricted conditions (Section \ref{sec:convergence_analysis}).

\subsection{Problem Modeling} \label{sec:infoset_abstraction_reformulation}

We begin our approach by redefining information set abstraction to expand its conceptual scope. Specifically, the abstraction takes \textbf{info-blocks} as its primary objects of operation. In an IIEFG, an \textbf{info-block partition} is defined as a partition of a player $i$’s infosets collection $\mathcal{I}_i$ into non-overlapping groups $\mathcal{J} = \{J_1, J_2, \dots, J_K\}$. Each info-block $J_k$ aggregates infosets that are strategically relevant or similar, under the constraint that all infosets $I, I' \in J_k$ share the same action space ($A(I) = A(I')$). The construction of info-block partitions is non-unique and must be predefined manually prior to information set abstraction, requiring game-specific analysis to determine valid groupings; we will discuss the construction of info-block partitions in the context of specific games in Section \ref{sec:partition_infoblock_chance}.

\begin{definition}[Extended Information Set Abstraction]
For an info-block \( J = \{I_1, \dots, I_n\} \) containing \( n \) infosets of player \( i \), \textbf{extended information set abstraction} is defined as the strategic association of $J$ with a set of \( m \) advisors \( E = \{e_1, e_2, \dots, e_m\} \). Each advisor \( e_p \in E \) is assigned a strategy function \( \sigma(e_p, \cdot) \colon A(J) \to [0,1] \) forming a valid probability distribution over \(A(J)\), where \( A(J) \) denotes the shared action space of all infosets in \( J \). The strategy of player \( i \) at an infoset \( I_q \in J \) is determined by an \( m \)-ary aggregation function \( f_q \), such that for any action \( a \in A(J) \):  
\[
\sigma(I_q, a) = f_q\Big( \sigma(e_1, a), \sigma(e_2, a), \dots, \sigma(e_m, a) \Big).
\]
\end{definition}

Advisors act as fixed-strategy decision-makers, adopting a uniform strategy across all infosets in $J$; for each infoset, the player predefines a method to construct its strategy by aggregating these advisor strategies. Extended information set abstraction leverages this structure to reduce storage overhead: instead of maintaining $n$ distinct strategies for each infoset in $J$, it retains $m$ advisor strategies over $E$. When \(m < n\), this yields a space complexity of \(\mathcal{O}(m \cdot |A(J)|)\), outperforming the \(\mathcal{O}(n \cdot |A(J)|)\) requirement of traditional abstractions. Notably, this framework generalizes traditional information set abstraction: if, for each \(I_q \in J\), the aggregation function \(f_q\) reduces to an identity mapping onto a single advisor \(e_p\) (i.e., \(\sigma(I_q, a) = \sigma(e_p, a)\) for all \(a \in A(J)\)), the two abstractions coincide.

\subsection{Driving Process for Embedding CFR}
\label{sec:driving_process}

For ease of discussion, while \(J = \{I_1, \dots, I_n\}\) and \(E = \{e_1, e_2, \dots, e_m\}\) are originally set-theoretic concepts for player $i$, we arrange them into column vectors \(\mathbf{J} = [I_1, \dots, I_n]^\top\) and \(\mathbf{E} = [e_1, \dots, e_m]^\top\) to facilitate the application of linear algebra tools. Furthermore, for any function $g$, its action on \(\mathbf{J}\) or \(\mathbf{E}\) is defined element-wise as

\begin{equation}
\left\{
\begin{aligned}
g(\mathbf{J}) &= [g(I_1), g(I_2),\dots, g(I_n)]^\top, \\
g(\mathbf{E}) &= [g(e_1), g(e_2),\dots, g(e_m)]^\top.
\end{aligned}
\right.
\label{eq:element_wise}
\end{equation}

Building on this notation, we propose \textbf{Embedding CFR}, which models the extended information set abstraction problem as a strategy voting framework, where a \textbf{embedding matrix} \(\mathbf{\Phi} = [\phi]_{m\times n}\) is constructed. Each non-negative element \(\phi_{p,q} \geq 0\)--with the constraint that \(\sum_{p=1}^m \phi_{p,q} = 1\) for each $q$--represents the confidence in trusting advisor \(e_p\) at infoset \(I_q\); here, \(\mathbf{\Phi}_{\cdot,q}\) denotes the embedding coordinates of infoset \(I_q\). Given a strategy \(\sigma_i^T(\mathbf{E})\) in the advisor space (also called embedding space), decisions in the original strategy space are made via the mapping 
\begin{equation}
\sigma_i^T(\mathbf{J}, a) = \mathbf{\Phi}^\top \sigma_i^T(\mathbf{E}, a) \label{eq:immediate_strategy_embedding}
\end{equation} 
for each \(a\in A(J)\).

To solve games under this abstraction, Embedding CFR operates as a no-regret learning algorithm that draws insights from Vanilla CFR \cite{zinkevich2007regret}: as illustrated in the right half of Figure \ref{fig:embedding_cfr_driving} and as we have detailed in Section \ref{subsec:cfr}, Vanilla CFR updates the cumulative counterfactual regret \(R^T(I)\) of the target infoset $I$ at iteration $T$, then derives the immediate strategy \(\sigma_i^{T+1}(I)\) for next-iteration online interaction with the environment via regret matching, after which the regret for the new iteration is updated—forming a driving process. The strategy to be evaluated, which exhibits favorable convergence properties, is the average strategy \(\bar{\sigma}_i^T(I)\). Embedding CFR extends this process by addressing how to drive it within the newly introduced advisor space.

\begin{figure}[t]
\centering
\includegraphics[width=\columnwidth]{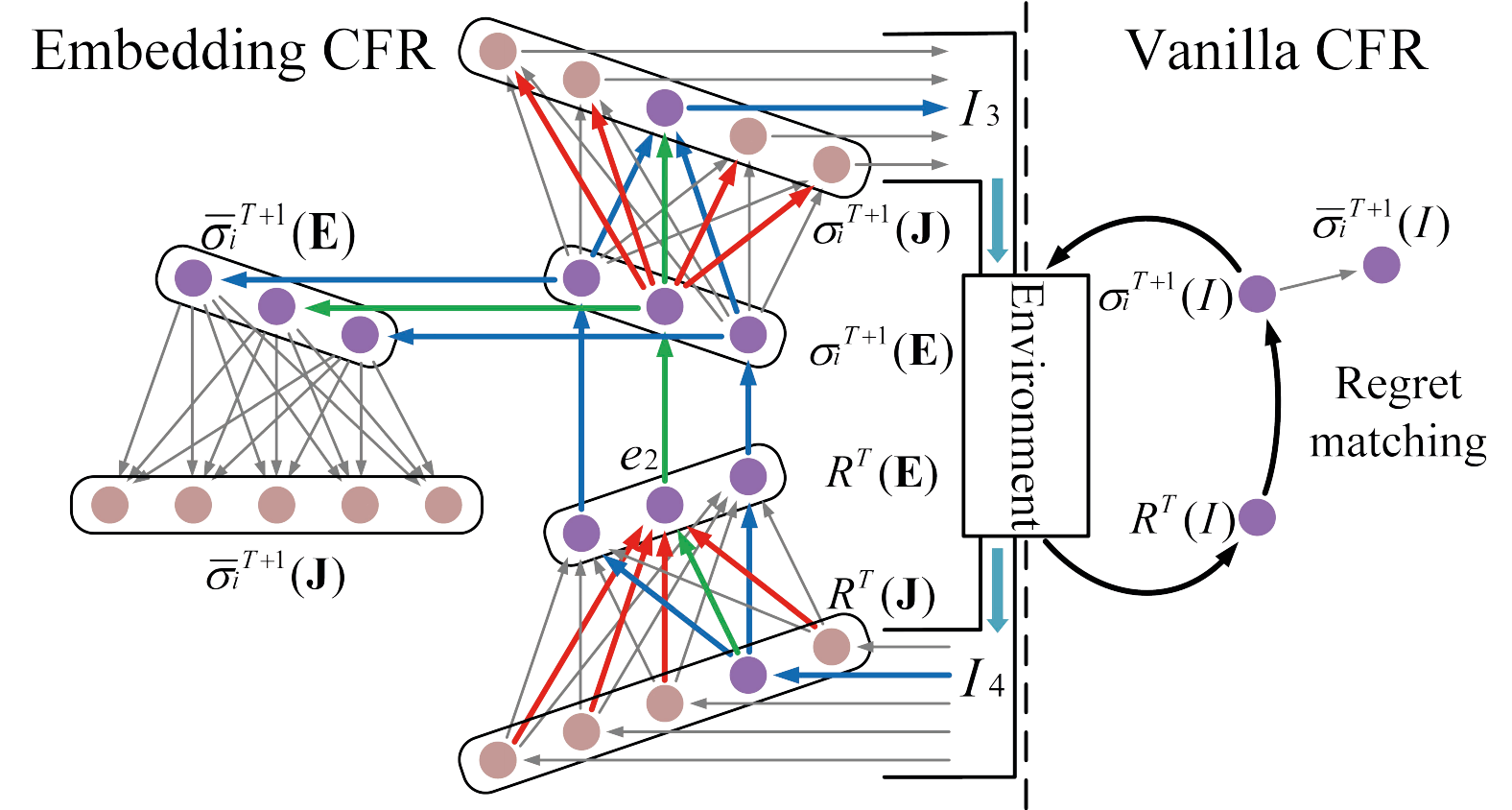} 
\caption{Schematic comparison of driving processes: Embedding CFR vs. Vanilla CFR}
\label{fig:embedding_cfr_driving}
\end{figure}

As illustrated in the left half of Figure \ref{fig:embedding_cfr_driving}, when collecting regrets from interaction with the environment, we first decompose regrets from the original space into the advisor space based on embedding coordinates. Using Equations \eqref{eq:immediate_regret}, \eqref{eq:cumulative_regret} and \eqref{eq:element_wise}, we define the embedded immediate regret as \(r^{T}(\mathbf{E},a) = \mathbf{\Phi} \cdot r^{T}(\mathbf{J}, a)\) , and the embedded cumulative regret as \(R^{T}(\mathbf{E},a) = \frac{1}{T} \sum_{t=1}^T r^{t}(\mathbf{E},a)\). From these, it follows that the embedded cumulative regret satisfies:
\begin{equation}
R^{T}(\mathbf{E},a) = \mathbf{\Phi} R^{T}(\mathbf{J}, a). \label{eq:embed_cumulative_regret}
\end{equation}

Each advisor \(p = 1, \dots, m\) then iteratively updates its embedded immediate strategy via regret matching based on its embedded cumulative regret:

\begin{align}
&\sigma^{T+1}_{i}(e_p, a) = \nonumber \\
&\begin{cases}
\frac{R^T(e_p,a)_+}{~~\sum\limits_{\mathclap{a'\in A(J)}} \left(R^T(e_p,a')_+\right)}, & \text{if } \sum\limits_{\mathclap{a'\in A(J)}} \left(R^T(e_p,a')_+\right) > 0, \\
\frac{1}{|A(J)|}, & \text{otherwise,}
\end{cases} \label{eq:regret_matching_embedding}
\end{align}

The embedded immediate strategy is, on one hand, mapped back to the immediate strategy in the original space via Equation \eqref{eq:immediate_strategy_embedding} and used to interact with the environment to initiate a new iteration; on the other hand, it accumulates into the embedded average strategy in the advisor space:
\begin{equation}
\bar{\sigma}^{T+1}(\mathbf{E}, a) = \frac{T}{T+1}\bar{\sigma}^{T}(\mathbf{E}, a) + \frac{1}{T+1}\sigma^{T+1}(\mathbf{E}, a)
\end{equation}

The average strategy in the original space is recovered via the formula 
\begin{equation}
    \bar{\sigma}_i^T(\mathbf{J}, a) = \mathbf{\Phi}^\top \bar{\sigma}_i^T(\mathbf{E}, a), \label{eq:average_strategy_embedding}
\end{equation}
which serves as the final learned strategy after the iterative process.

It is important to note that while regret embedding (Equation \eqref{eq:embed_cumulative_regret}) and strategy recovery (Equations \eqref{eq:immediate_strategy_embedding}, \eqref{eq:average_strategy_embedding}) involve full matrix multiplications--specifically of sizes \(m \times n\) with \(n \times m\) and \(n \times m\) with \(m \times n\)--this might initially suggest unavoidable \(\mathcal{O}(n)\) or higher space complexity. Our solution addresses this by framing regret embedding as a sampling process and strategy recovery as a query procedure. Rather than operating on the full set of infosets, we dynamically engage a subset of \(l < n\) infosets \(I_{(1)}, \dots, I_{(l)}\) (mapping to indices \(j_1, \dots, j_l\) in \(\mathbf{J}\)) at each iteration. As visualized in Figure \ref{fig:embedding_cfr_driving}, this selective focus is concretely illustrated: iteration $T$ activates only the single infoset \(I_4\), while iteration \(T+1\) shifts to \(I_3\).

This design yields a critical efficiency gain: throughout the process, storage is allocated solely to advisor-space quantities \(R^T(\mathbf{E}, a)\), \(\sigma_i^{T}(\mathbf{E},a)\), and \(\bar{\sigma}_i^T(\mathbf{E}, a)\) for each \(a \in A(J)\), with no need to reserve space for their original-space counterparts. To operationalize the sampling-based embedding, we use the \(\tilde{R}^{T}(\mathbf{E}, a)\) in place of \(R^{T}(\mathbf{E}, a)\):
\begin{align}
\tilde{R}^{T}(\mathbf{E}, a) = \frac{T-1}{T} \tilde{R}^{T-1}(\mathbf{E}, a) + \frac{1}{T} \sum_{k=1}^l \mathbf{\Phi}_{\cdot, j_k}r^{T}(I_k, a). \nonumber 
\end{align}
Correspondingly, the query-based recovery retrieves original-space strategies from the advisor space via:\[\sigma_i^{T}(I_{(k)}, a) =  (\mathbf{\Phi}_{\cdot, j_k})^\top \sigma_i^T(\mathbf{E}, a).
\]
An additional consideration is how to efficiently retrieve the embedding coordinates  \(\mathbf{\Phi}_{\cdot, j_k}\) for each infoset \(I_{(k)}\)--this will be discussed in the next section. Notably, each \(\mathbf{\Phi}_{\cdot, j_k}\) only incurs \(\mathcal{O}(m)\) space overhead, which does not significantly add to overall storage requirements. The blue and green arrows in Figure \ref{fig:embedding_cfr_driving} illustrate these dynamic pathways, with purple nodes representing the quantities that need to be stored in this iteration—emphasizing that this paired sampling-and-query mechanism ensures Embedding CFR operates with \(\mathcal{O}(m)\) space complexity. Pseudocode for poker-specific Embedding CFR is in Appendix 
\ref{apdx:pseudocode}
\negmedspace.

\subsection{Convergence Analysis} \label{sec:convergence_analysis}
Analyzing the convergence of the Embedding CFR algorithm presents significant challenges, which we acknowledge upfront. 

First, establishing the convergence of cumulative regrets in the original space remains non-trivial: embedding inherently introduces information loss, and the quality of solutions recovered in the original space is tightly coupled with the construction of the embedding matrix \(\mathbf{\Phi}\). To illustrate this complexity, consider an extreme scenario where all infosets rely on a single advisor $p$ with full confidence. In such cases, the iterative updates would lack the diversity needed to explore the solution space effectively, making it implausible to converge to a high-quality solution.

Second, even in the advisor space, proving that the embedded cumulative regret of a single advisor stably converges to zero remains challenging. This is because each infoset is simultaneously shaped by all advisors--a dynamic where their update processes are mutually interdependent, with adjustments to one advisor rippling through and altering the convergence trajectory of others.


Fortunately, we can show that when an advisor acts in isolation, Embedding CFR ensures its regret decreases. Embedding CFR can be conceptualized as follows: in iteration $T$, when player $i$ encounters an infoset \(I_q \in J\) ($J$ is an info-block relevant to player $i$) during the game, a random selection is made to decide with probability $\phi_{p,q}$ to act according to the strategy \(\sigma^T_i(e_p)\). Consider an approximate sampling scenario in Embedding CFR, each iteration strictly needs to consider all infosets in $J$; and the key difference is that for each infoset, a selection must be made among \(e_1, \dots, e_m\)--choosing to interact and update according to a specific advisor's strategy. At a particular iteration $T$, it happens that player $i$ selects \(e_p\) for every infoset \(I_q\) (as in Figure \ref{fig:embedding_cfr_driving}’s green/red arrows, where all infosets choose \(e_2\)); this approximates an advisor’s isolated impact on its regret.

\begin{proposition} \label{thm:regret_decrease}
Let \(S^T_p = \sum_{a\in A(J)} \left(R^T(e_p, a)_+\right)^2\) and \(\Delta_J = \max_{\substack{\sigma\in \Sigma \\ I\in J\\ a,a'\in A(J)}}
\bigl|v_{\mathcal{P}(I)}^{\sigma_{|I\to a}}(I) - v_{\mathcal{P}(I)}^{\sigma_{|I\to a'}}(I)\bigr|\). In the aforementioned scenario, the following holds: If \(S^T_p \leq \frac{n|A(J)| \cdot \Delta_J^2 \cdot \|\mathbf{\Phi}_{p,\cdot}\|_2^2}{T}\), then \(S^{T+1}_p \leq \frac{n|A(J)| \cdot \Delta_J^2 \cdot \|\mathbf{\Phi}_{p,\cdot}\|_2^2}{T+1}\). Otherwise, \(S^{T+1}_p \leq \frac{T}{T+1}S^{T}_p\) (see Appendix 
\ref{apdx:proofs} 
for the proof).
\end{proposition}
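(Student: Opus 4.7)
The plan is to adapt the classical Blackwell/Hart--Mas-Colell squared-norm argument for regret matching to the embedding-level quantity $S_p^T$, using the isolation scenario to kill the cross term that otherwise appears. I would start by expanding $S_p^{T+1}$ through the averaged update $R^{T+1}(e_p,a) = \frac{T}{T+1} R^T(e_p,a) + \frac{1}{T+1} r^{T+1}(e_p,a)$; applying the elementary inequality $(\alpha+\beta)_+^2 \leq (\alpha_+ + \beta)^2$ componentwise and summing over $a \in A(J)$ yields
\[
S_p^{T+1} \leq \frac{T^2}{(T+1)^2} S_p^T + \frac{2T}{(T+1)^2} \sum_{a} R^T(e_p,a)_+ \, r^{T+1}(e_p,a) + \frac{1}{(T+1)^2} \sum_{a} \bigl(r^{T+1}(e_p,a)\bigr)^2.
\]

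Next I would show the cross term vanishes. Regret matching (Equation~\eqref{eq:regret_matching_embedding}) sets $\sigma^{T+1}_i(e_p,a) \propto R^T(e_p,a)_+$ whenever the normalizer is positive (if it is zero, $S_p^T = 0$ and the proposition is trivial). Under the isolation scenario every $I_q \in J$ plays exactly $\sigma^{T+1}_i(e_p,\cdot)$, so the standard identity $\sum_a \sigma(I_q,a)\, v_i^{\sigma_{|I_q\to a}}(I_q) = v_i^\sigma(I_q)$ gives $\sum_a \sigma^{T+1}_i(e_p,a)\, r^{T+1}(I_q,a) = 0$ for every $q$. Multiplying by $\phi_{p,q}$, summing over $q$, and invoking $r^{T+1}(e_p,a) = \sum_q \phi_{p,q}\, r^{T+1}(I_q,a)$ produces $\sum_a \sigma^{T+1}_i(e_p,a)\, r^{T+1}(e_p,a) = 0$, which forces the cross-term sum to zero.

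For the squared-norm term, Cauchy--Schwarz gives $(r^{T+1}(e_p,a))^2 \leq \|\mathbf{\Phi}_{p,\cdot}\|_2^2 \sum_q (r^{T+1}(I_q,a))^2$, and writing $r^{T+1}(I_q,a) = \sum_{a'} \sigma^{T+1}(I_q,a')\bigl(v_i^{\sigma_{|I_q\to a}}(I_q) - v_i^{\sigma_{|I_q\to a'}}(I_q)\bigr)$ combined with the definition of $\Delta_J$ gives $|r^{T+1}(I_q,a)| \leq \Delta_J$. Summing over $a$ yields $\sum_a (r^{T+1}(e_p,a))^2 \leq C := n\,|A(J)|\, \Delta_J^2\, \|\mathbf{\Phi}_{p,\cdot}\|_2^2$, so the recursion collapses to $S_p^{T+1} \leq \frac{T^2}{(T+1)^2} S_p^T + \frac{C}{(T+1)^2}$.

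A short case analysis closes the two clauses: if $S_p^T \leq C/T$, substituting gives $S_p^{T+1} \leq \frac{T^2}{(T+1)^2}\cdot\frac{C}{T} + \frac{C}{(T+1)^2} = \frac{C}{T+1}$; otherwise $T S_p^T > C$, whence $\frac{T}{T+1} S_p^T - S_p^{T+1} \geq \frac{1}{(T+1)^2}(T S_p^T - C) > 0$, giving $S_p^{T+1} \leq \frac{T}{T+1} S_p^T$. The main obstacle, and precisely the reason the isolation scenario is required, is the cross-term vanishing step: $r^{T+1}(e_p,\cdot)$ is a $\mathbf{\Phi}$-weighted mixture of regrets coming from infosets that would otherwise be driven by a mixture of advisors, so outside isolation one cannot invoke the single-strategy orthogonality identity and the squared-norm recursion cannot be closed.
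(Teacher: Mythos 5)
Your proposal is correct and follows essentially the same route as the paper's proof: the truncation inequality $\left[(a+b)_+\right]^2 \le (a_+)^2 + 2a_+b + b^2$ applied to the averaged update, cross-term cancellation from the regret-matching proportionality $\sigma^{T+1}(e_p,\cdot)\propto R^T(e_p,\cdot)_+$ combined with the counterfactual-value factorization under the isolation scenario, the Cauchy--Schwarz bound $\sum_a (r^{T+1}(e_p,a))^2 \le n|A(J)|\Delta_J^2\|\mathbf{\Phi}_{p,\cdot}\|_2^2$, and the identical two-case analysis. Your per-infoset organization of the cancellation (showing $\sum_a \sigma^{T+1}(e_p,a)\,r^{T+1}(I_q,a)=0$ for each $q$ and then weighting by $\phi_{p,q}$) is just a cleaner arrangement of the paper's direct expansion, and in fact fixes a small notational slip in the paper where $\sigma^T$ appears in place of $\sigma^{T+1}$.
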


In Proposition \ref{thm:regret_decrease}, \(S^T_p\) quantifies the overall embedded cumulative positive regret level associated with \(e_p\). The proposition shows that when \(S^T_p\) is less than a specified threshold, \(S^{T+1}_p\) can be bounded by a definite decreasing value. When \(S^T_p\) exceeds this threshold, although no explicit upper bound for \(S^{T+1}_p\) can be provided, it is guaranteed to be a decreasing quantity relative to \(S^T_p\). This conclusion offers an approximate analysis of regret reduction when a single advisor acts independently. By extension, it provides intuition that even when all advisors act together (with mutual influences), each advisor's regret tends to exhibit a decreasing trend, meaning Embedding CFR can achieve regret-decreasing learning within the advisor space.

\section{Application To Poker Games}

We now discuss applying the Embedding CFR algorithm to strategy solving in typical imperfect-information extensive-form games like poker, using domain knowledge.

\subsection{Partitioning of Info-blocks by Chance Actions} \label{sec:partition_infoblock_chance}

\begin{figure}[H]
  \centering
  \begin{subfigure}[b]{0.23\textwidth}
    \centering
    \includegraphics[width=0.98\textwidth]{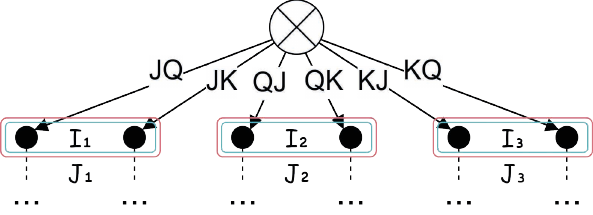}
    \caption{\(I_1, I_2, I_3\) are partitioned into distinct info-blocks \(J_1\) and \(J_2\).}
  \end{subfigure}
   \hfill
  \begin{subfigure}[b]{0.23\textwidth}
    \centering
    \includegraphics[width=0.98\textwidth]{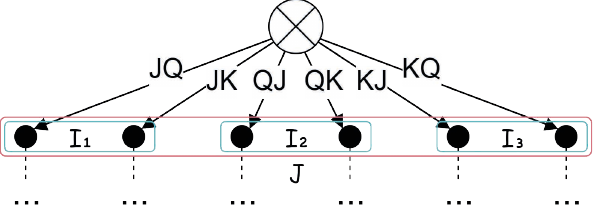}
    \caption{\(I_1, I_2, I_3\) are partitioned into the same info-block $J$.}
  \end{subfigure}
  \caption{Two info-block partitioning schemes for player 1's infosets \(I_1, I_2, I_3\) in Kuhn Poker, noting that all sets satisfy action-space consistency (\(A(I_1) = A(I_2) = A(I_3)\)).}
  \label{fig:kuhn_info_blocks_partition}
\end{figure}

First, we address the unresolved issue of constructing info-block partitions from Section \ref{sec:infoset_abstraction_reformulation}.

On one hand, as noted, such partitions (exemplified in Kuhn Poker; Figure \ref{fig:kuhn_info_blocks_partition}, rules in Appendix 
\ref{apdx:kuhn}
\negmedspace) admit non-unique constructions, and not all partitions facilitate strategy solving. Trivially isolating each infoset into individual blocks (Figure \ref{fig:kuhn_info_blocks_partition}(a)) undermines the generalization benefits of correlations among similar infosets; intuitively, the framework functions better with more aggregated infosets per block. Conversely, merging all infosets with matching action spaces is flawed: in heads-up limit Texas Hold'em (rules in Appendix 
\ref{apdx:heads_up_texas}
\negmedspace), pre-flop and post-flop infosets may share action spaces (call/raise/fold) but are strategically distinct, making merger inappropriate.

On the other hand, algorithmically, Embedding CFR operates per info-block, requiring an embedding matrix \(\mathbf{\Phi}\) for each--a labor-intensive process. If partitions allowed matrices to be constructed for a subset of info-blocks, with others extendable from these, workload would drop markedly.

Poker games possess a key property that facilitates info-block partitioning while addressing both challenges mentioned above: history indistinguishability within an infoset stems from chance actions (e.g., dealing), while non-chance actions (e.g., betting) are public and preserve strategic distinctness. Critically, all histories in the same infoset share an identical non-chance action trace (a result of \textbf{perfect recall} in poker). This formalizes our partitioning rule: info-blocks in poker games are defined as groups of infosets whose contained histories share the same non-chance action trace.

Consider a concrete example from HULHE, where a game history $h$ can be decomposed into interleaved chance (card dealing) and non-chance (betting) segments:
\[
h = \underbrace{\texttt{8}^1_h\texttt{8}^1_d\texttt{J}^2_s\texttt{Q}^2_s}_{\mathclap{\text{Chance: Private Cards\quad}}} \cdot 
   \overbrace{\texttt{rRc}}^{\mathclap{\text{Non-chance Actions}}} \cdot
   \underbrace{\texttt{2}_s\texttt{J}_s\texttt{8}_h}_{\mathclap{\text{\quad Chance: Community Cards}}} \cdot
   \overbrace{\texttt{Cr}}^{\mathclap{\text{Non-chance Actions}}} \in I \in \mathcal{I}_2, \footnote{\texttt{c/r/f} (Player 1, small blind) and \texttt{C/R/F} (Player 2, big blind) denote check, raise, and fold, respectively.}
\]
For two additional histories \(h'\) and \(h''\):
\begin{itemize}
    \item $h' = \texttt{7}^1_h\texttt{7}^1_d\texttt{J}^2_s\texttt{Q}^2_s \cdot \texttt{rRc}\cdot \texttt{2}_s\texttt{J}_s\texttt{8}_h\cdot \texttt{Cr} \in I$,
    \item $h'' = \texttt{7}^1_h\texttt{5}^1_d\texttt{8}^2_s\texttt{4}^2_s \cdot \texttt{rRc}\cdot \texttt{J}_s\texttt{Q}_s\texttt{3}_h\cdot \texttt{Cr} \in I' \ne I$,
\end{itemize}
all three histories (and their containing infosets \(I, I'\)) belong to the same info-block $J$ by our partitioning rule, as they share the identical non-chance trace: \(\mathcal{H}_{-c}(h) = \mathcal{H}_{-c}(h') = \mathcal{H}_{-c}(h'') = \emptyset \cdot \texttt{rRc}\cdot \emptyset \cdot \texttt{Cr},\)
where \(\mathcal{H}_{-c}(\cdot)\) denotes the operator that retains only non-chance actions (replacing chance segments with \(\emptyset\)).

Furthermore, this partitioning rule directly addresses the challenge of constructing embedding matrices for each info-block. For example, consider histories \(\dot{h}, \dot{h}', \dot{h}''\) with the same chance traces as \(h, h', h''\) but a different non-chance trace (\(\emptyset\cdot \texttt{rC}\cdot\emptyset\cdot \texttt{C}\))--the infosets to which these histories belong come from another info-block \(\dot{J}\):
\begin{itemize}
    \item $\dot{h} = \texttt{8}^1_h\texttt{8}^1_d\texttt{J}^2_s\texttt{Q}^2_s \cdot \texttt{rC}\cdot \texttt{2}_s\texttt{J}_s\texttt{8}_h\cdot \texttt{C} \in \dot{I} \in \dot{J}$,
    \item $\dot{h}' = \texttt{7}^1_h\texttt{7}^1_d\texttt{J}^2_s\texttt{Q}^2_s \cdot \texttt{rC}\cdot \texttt{2}_s\texttt{J}_s\texttt{8}_h\cdot \texttt{C} \in \dot{I} \in \dot{J}$,
    \item $\dot{h}'' = \texttt{7}^1_h\texttt{5}^1_d\texttt{8}^2_s\texttt{4}^2_s \cdot \texttt{rC}\cdot \texttt{J}_s\texttt{Q}_s\texttt{3}_h\cdot \texttt{C} \in \dot{I}' \in \dot{J}$.
\end{itemize}

Notably, $J$ and \(\dot{J}\) are in one-to-one correspondence: their infosets (e.g., \(I \leftrightarrow \dot{I}\), \(I' \leftrightarrow \dot{I}'\)) are counterparts linked by identical chance traces. Since history indistinguishability within poker infosets stems solely from such chance actions (not non-chance actions), we construct embedding matrix \(\mathbf{\Phi}\) based on chance actions (i.e., card dealing)--details in Section \ref{sec:embedding_matrices}. Given that $J$ and \(\dot{J}\) share this chance-based structure via their corresponding infosets, \(\mathbf{\Phi}\) built for $J$ can be directly reused for \(\dot{J}\), substantially reducing construction workload.

\subsection{Embedding Matrices: Construction and Deployment in Poker Games} \label{sec:embedding_matrices}

We now discuss the construction of embedding matrices in poker games and their deployment in Embedding CFR iterations. As established in Section \ref{sec:partition_infoblock_chance}, these matrices are grounded in chance actions—specifically, players’ hands in poker games.

Poker hands exhibit a comparative structure (not strictly a total order, as not all hands are directly comparable). In the final betting round, hands form a total order enabling definitive comparisons; in earlier rounds, their relative strength is inferred from outcome distributions when rolled out to the final round (simulating remaining card deals yields probabilistic performance measures). This comparative strength captures both distinctions and connections between infosets: hands with similar strength profiles share strong strategic ties, while divergent profiles indicate distinctiveness—forming the basis of our embedding  matrix construction.

However, storing a complete \(\mathbf{\Phi}\) would require \(m \times n\) space, which is infeasible given poker’s massive hand scale (\(n \approx 5 \times 10^{10}\) in Texas Hold'em). Thus, instead of constructing \(\mathbf{\Phi}\) in full, we aim to generate the embedding coordinates \(\mathbf{\Phi}_{\cdot, q}\) on-the-fly for each infoset \(I_q\).

\begin{figure}[t]
\centering
\includegraphics[width=0.8\columnwidth]{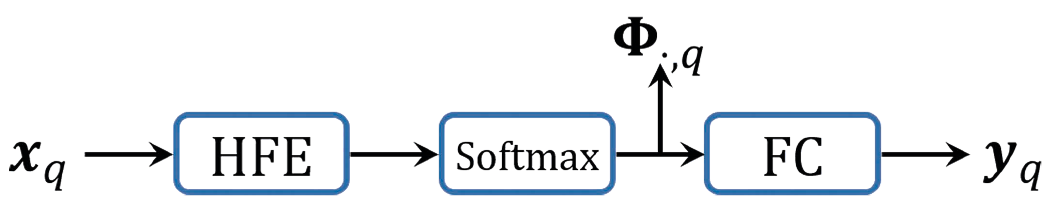} 
\caption{The network architecture of HandEbdNet.}
\label{fig:hand_embedding_net}
\end{figure}

Neural networks are well-suited for the aforementioned supervised learning and dynamic embedding generation scenarios. To this end, we propose HandEbdNet, an embedding network illustrated in Figure \ref{fig:hand_embedding_net}. It takes as input a hand tensor \(\mathbf{x}_q\)—a structured encoding of a hand’s suits, ranks, and round-wise presence (corresponding to \(I_q \in J\))—and outputs a strength tensor \(\mathbf{y}_q\) capturing comparative strength (e.g., win-rate distributions across rounds).

Functionally, HandEbdNet comprises three core components: (1) a hand feature encoder (HFE)--a composite module of multiple networks--that maps \(\mathbf{x}_q\) to an m-dimensional intermediate feature vector; (2) a softmax layer transforming this vector into an m-dimensional probability distribution, viewed as \(\mathbf{\Phi}_{\cdot, q}\); and (3) a final fully connected layer (FC) that transforms this distribution, with supervision from \(\mathbf{y}_q\) forming the learning objective.

The core insight is using \(\mathbf{y}_q\) to induce discriminative \(\mathbf{\Phi}_{\cdot, q}\): hands with similar strength characteristics (reflected in \(\mathbf{y}_q\)) yield proximate \(\mathbf{\Phi}_{\cdot, q}\), capturing meaningful structural relationships between infosets in the advisor space. In poker scenarios, traversing all possible hands (rather than storing them) is feasible, and no entirely new hands will be encountered during embedding coordinate generation. Thus, HandEbdNet is not prone to overfitting in both training and deployment. Details on constructing \(\mathbf{x}_q\), \(\mathbf{y}_q\), and HandEbdNet’s full architecture are provided in Appendix 
\ref{apdx:handebdnet_implementation}
\negmedspace.

\section{Experiment}

We conducted experiments in the Numeral211 Hold'em environment, a simplified variant of Texas Hold'em poker proposed by \citet{fu2024signal}. This game uses a 40-card deck and includes 3 betting rounds, retaining sufficient complexity—particularly in hand diversity—while reducing the overall game scale. This makes it an ideal testbed for researching information set abstraction. Detailed rules are provided in Appendix 
\ref{apdx:numeral211}
\negmedspace.

We primarily compare strategies generated by (extended) information set abstraction methods—including the classical EHS, PaEmd (an algorithm successfully applied in DeepStack and Libratus), and the novel KrwEmd proposed by \citet{fu2024signal}—when paired with Vanilla CFR for solving, alongside our Embedding CFR, which simultaneously handles information set abstraction and strategy solving.

To ensure a fair comparison, all algorithms are maintained at the logical spatial resources (in terms of the number of abstracted information sets or the size of the advisor space). For the Numeral211 Hold'em environment, the number of hand combinations across its three betting rounds is \(\binom{40}{2}\), \(\binom{40}{2} \times \binom{38}{1}\), and \(\binom{40}{2} \times \binom{38}{1} \times \binom{37}{1}\) respectively. The number of structurally equivalent hand strength classes (via lossless isomorphism~\cite{gilpin2007lossless}) across the three rounds is 100, 2250, and 3957, as determined by \citet{waugh2013fast}. Regarding the logical spatial resources used by abstraction algorithms and Embedding CFR, they are restricted to 225 and 396 in betting rounds 2 and 3 respectively. Notably, all algorithms do not apply abstraction or embedding in the first round; instead, they rely on lossless isomorphism.

\begin{figure}[t]
\centering
\includegraphics[width=0.9\columnwidth]{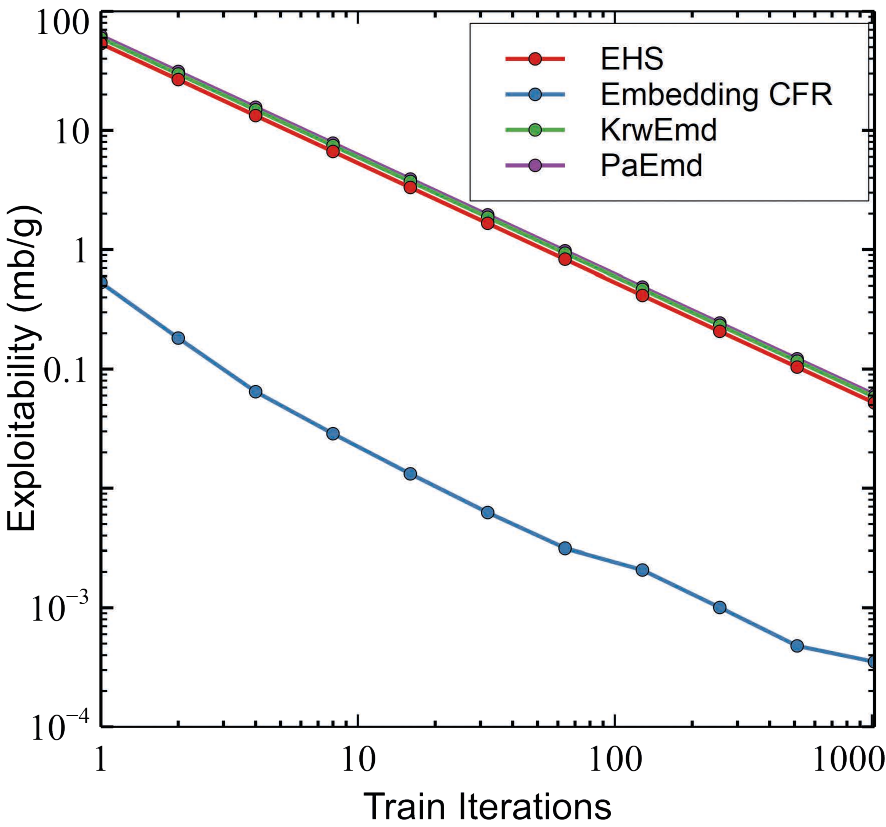} 
\caption{Exploitability convergence comparison of clustering-based algorithm (EHS, PaEmd, KrwEmd) vs. Embedding CFR algorithms in Numeral211 Hold'em.}
\label{fig:experiment1}
\end{figure}

We compare the convergence of exploitability among these algorithms, and the experimental results are presented in Figure \ref{fig:experiment1}. For the experiment, we sampled a number of hands equivalent to all possible hand combinations in a game, which is \(\binom{40}{2} \times \binom{38}{1} \times \binom{37}{1}\) (approximately \(10^6\)) as one iteration, and a total of 1024 iterations are conducted to compare the exploitability in various scenarios. The unit of exploitability is milli blind per game (mb/g). For each baseline comparison algorithm, we provide three parameter configurations. The figure shows the optimal experimental curve of the algorithm. More extensive experiments and details of the experimental equipment are provided in Appendix
\ref{apdx:experiment}
\negmedspace. 

It can be observed that the performance of abstraction methods based on hand clustering (EHS, PaEmd, KrwEmd) varies slightly, but they are at a comparable level. In contrast, the Embedding CFR algorithm demonstrates a significant performance boost in terms of exploitability reduction, with the quality of its strategy being notably superior to that of clustering-based algorithms. This result effectively validates the effectiveness of the Embedding CFR algorithm in the context of information set abstraction for poker games.

\section{Conclusion}
In conclusion, this paper presents the Embedding CFR algorithm, which innovatively introduces the embedding concept to address the pre-training process of information set abstraction in imperfect information games. This approach marks a significant breakthrough in the field, as it comprehensively outperforms clustering-based methods in our experimental settings.

\section*{Acknowledgments}

This work was supported by the National Science and Technology Major Project (Grant No. 2022ZD0116403) and the China Postdoctoral Science Foundation (Grant No. 2024M763533).

\bibliography{aaai2026}

@article{bowling2015heads,
  title={Heads-up limit hold’em poker is solved},
  author={Bowling, Michael and Burch, Neil and Johanson, Michael and Tammelin, Oskari},
  journal={Science},
  volume={347},
  number={6218},
  pages={145--149},
  year={2015},
  publisher={American Association for the Advancement of Science}
}

@inproceedings{brown2020combining,
  title={Combining deep reinforcement learning and search for imperfect-information games},
  author={Brown, Noam and Bakhtin, Anton and Lerer, Adam and Gong, Qucheng},
  booktitle={In Proceedings of the 34th International Conference on Neural Information Processing Systems (NeurIPS 2020)},
  pages={17057--17069},
  year={2020}
}

@inproceedings{brown2017dynamic,
  title={Dynamic thresholding and pruning for regret minimization},
  author={Brown, Noam and Kroer, Christian and Sandholm, Tuomas},
  booktitle={In Proceedings of the 31st AAAI Conference On Artificial Intelligence (AAAI-17)},
  pages={421--429},
  year={2017}
}

@inproceedings{brown2019deep,
  title={Deep counterfactual regret minimization},
  author={Brown, Noam and Lerer, Adam and Gross, Sam and Sandholm, Tuomas},
  booktitle={In Proceedings of the The 36th International Conference on Machine Learning (ICML 2019)},
  pages={793--802},
  year={2019},
}

@inproceedings{brown2015regret,
  title={Regret-based pruning in extensive-form games},
  author={Brown, Noam and Sandholm, Tuomas},
  booktitle={In Proceedings of the 29th International Conference on Neural Information Processing Systems (NeurIPS 2015)},
  year={2015}
}

@inproceedings{brown2017reduced,
  title={Reduced space and faster convergence in imperfect-information games via pruning},
  author={Brown, Noam and Sandholm, Tuomas},
  booktitle={In Proceedings of the The 34th International Conference on Machine Learning (ICML 2017)},
  pages={596--604},
  year={2017}
}

@article{brown2018superhuman,
  title={Superhuman AI for heads-up no-limit poker: Libratus beats top professionals},
  author={Brown, Noam and Sandholm, Tuomas},
  journal={Science},
  volume={359},
  number={6374},
  pages={418--424},
  year={2018},
  publisher={American Association for the Advancement of Science}
}

@inproceedings{brown2019solving,
  title={Solving imperfect-information games via discounted regret minimization},
  author={Brown, Noam and Sandholm, Tuomas},
  booktitle={In Proceedings of the 33rd AAAI Conference on Artificial Intelligence (AAAI-19)},
  pages={1829--1836},
  year={2019}
}

@article{brown2019superhuman,
  title={Superhuman AI for multiplayer poker},
  author={Brown, Noam and Sandholm, Tuomas},
  journal={Science},
  volume={365},
  number={6456},
  pages={885--890},
  year={2019},
  publisher={American Association for the Advancement of Science}
}

@inproceedings{farina2021faster,
  title={Faster game solving via predictive blackwell approachability: Connecting regret matching and mirror descent},
  author={Farina, Gabriele and Kroer, Christian and Sandholm, Tuomas},
  booktitle={In Proceedings of the 35th AAAI Conference on Artificial Intelligence (AAAI-21)},
  number={6},
  pages={5363--5371},
  year={2021}
}

@misc{fu2024signal,
      title={KrwEmd: Revising the Imperfect-Recall Abstraction from Forgetting Everything}, 
      author={Fu, Yanchang and Yin, Qiyue and Liu, Shengda and Xu, Pei and Huang, Kaiqi},
      year={2025},
      eprint={2511.12089},
      archivePrefix={arXiv},
      primaryClass={cs.AI},
      url={https://arxiv.org/abs/2511.12089}, 
}

@inproceedings{ganzfried2014potential,
  title={Potential-aware imperfect-recall abstraction with earth mover's distance in imperfect-information games},
  author={Ganzfried, Sam and Sandholm, Tuomas},
  booktitle={In Proceedings of the 28th AAAI Conference on Artificial Intelligence (AAAI-14)},
  pages={682--690},
  year={2014}
}

@inproceedings{gilpin2007better,
  title={Better automated abstraction techniques for imperfect information games, with application to Texas Hold'em poker},
  author={Gilpin, Andrew and Sandholm, Tuomas},
  booktitle={In Proceedings of the 20th International Joint Conference on Artificial Intelligence (IJCAI-07)},
  pages={1--8},
  year={2007}
}

@article{gilpin2007lossless,
  title={Lossless abstraction of imperfect information games},
  author={Gilpin, Andrew and Sandholm, Tuomas},
  journal={Journal of the ACM (JACM)},
  volume={54},
  number={5},
  pages={25--es},
  numpages = {30},
  year={2007},
  publisher={ACM New York, NY, USA}
}

@inproceedings{gilpin2007potential,
  title={Potential-aware automated abstraction of sequential games, and holistic equilibrium analysis of Texas Hold'em poker},
  author={Gilpin, Andrew and Sandholm, Tuomas and S{\o}rensen, Troels Bjerre},
  booktitle={In Proceedings of the 22nd AAAI Conference on Artificial Intelligence (AAAI-07)},
  pages={50--57},
  year={2007}
}

@article{hart2000simple,
  title={A simple adaptive procedure leading to correlated equilibrium},
  author={Hart, Sergiu and Mas-Colell, Andreu},
  journal={Econometrica},
  volume={68},
  number={5},
  pages={1127--1150},
  year={2000},
  publisher={Wiley Online Library}
}

@article{johanson2013measuring,
  title={Measuring the size of large no-limit poker games},
  author={Johanson, Michael},
  journal={arXiv preprint arXiv:1302.7008},
  year={2013}
}

@article{lanctot2009monte,
  title={Monte Carlo sampling for regret minimization in extensive games},
  author={Lanctot, Marc and Waugh, Kevin and Zinkevich, Martin and Bowling, Michael},
  journal={In Proceedings of the 23rd International Conference on Neural Information Processing Systems (NIPS 2009)},
  pages={1078--1086},
  year={2009}
}

@inproceedings{li2024rl,
  title={RL-CFR: improving action abstraction for imperfect information extensive-form games with reinforcement learning},
  author={Li, Boning and Fang, Zhixuan and Huang, Longbo},
  booktitle={In Proceedings of the 41st International Conference on Machine Learning (ICML 2024)},
  pages={27752--27770},
  year={2024}
}

@inproceedings{li2025efficient,
  title={Efficient Online Pruning and Abstraction for Imperfect Information Extensive-Form Games},
  author={Boning Li and Longbo Huang},
  booktitle={In Proceedings of 13th International Conference on Learning Representations (ICLR 2025)},
  year={2025},
}

@inproceedings{mcaleer2023escher,
  title={{ESCHER}: Eschewing Importance Sampling in Games by Computing a History Value Function to Estimate Regret},
  author={Stephen Marcus McAleer and Gabriele Farina and Marc Lanctot and Tuomas Sandholm},
  booktitle={In Proceedings of the 11th International Conference on Learning Representations (ICLR 2023)},
  year={2023},
}

@article{moravvcik2017deepstack,
  title={Deepstack: Expert-level artificial intelligence in heads-up no-limit poker},
  author={Morav{\v{c}}{\'\i}k, Matej and Schmid, Martin and Burch, Neil and Lis{\`y}, Viliam and Morrill, Dustin and Bard, Nolan and Davis, Trevor and Waugh, Kevin and Johanson, Michael and Bowling, Michael},
  journal={Science},
  volume={356},
  number={6337},
  pages={508--513},
  year={2017},
  publisher={American Association for the Advancement of Science}
}

@thesis{waugh2009abstraction,
  title={Abstraction in large extensive games},
  author={Waugh, Kevin},
  year={2009},
  school={University of Alberta},
  address={Edmonton, Alta.},
  type={Bachelor's Thesis}
}

@inproceedings{waugh2013fast,
  title={A Fast and Optimal Hand Isomorphism Algorithm},
  author={Waugh, Kevin},
  year={2013},
  booktitle={The Second Computer Poker and Imperfect Information Symposium, AAAI},
}

@inproceedings{zinkevich2007regret,
  title={Regret minimization in games with incomplete information},
  author={Zinkevich, Martin and Johanson, Michael and Bowling, Michael and Piccione, Carmelo},
  booktitle={In Proceedings of the 21st International Conference on Neural Information Processing Systems (NIPS 2007)},
  pages={1729-1736},
  year={2007}
}

\newpage
\appendix
\onecolumn 

\section{Rules of Referenced Imperfect Information Games} \label{apdx:game_rules} 

\subsection{Heads-Up Texas Hold'em} \label{apdx:heads_up_texas}  

Heads-Up Texas Hold'em is a two-player variant of Texas Hold'em, retaining the core mechanics of community cards and hand rankings while adapting to two-player dynamics. Below outlines the shared rules between its limit and no-limit variants, followed by their distinct betting structures.  

\begin{enumerate}
    \item \textbf{Betting Rounds:} Consists of four rounds—preflop, flop, turn, and river—where players can act sequentially after the corresponding cards are dealt.

    \item \textbf{Deck \& Hand Composition:}
    
        - Employs a standard 52-card deck (excluding Jokers), with each player privately dealt two private cards preflop.
  
        - Five community cards are revealed sequentially: three cards (the flop) are dealt first, followed by the turn (fourth card) and river (fifth card), forming the strongest possible 5-card hand in combination with the player's private cards.  
    \item \textbf{Hand Rankings:} Follow standard poker hierarchies, from Straight Flush (highest) to High Card (lowest), as detailed in Table \ref{tbl:texas_hand_rank}.  
    \item \textbf{Blinds:} Small Blind (SB) and Big Blind (BB) initiate the pot, with the SB posting half the BB amount.  

    \item \textbf{Action Order:} Preflop action starts with the SB; postflop (flop, turn, river) action starts with the BB.
    
    \item \textbf{AI Research Standard Setup:} Commonly used in academic studies, the SB is set to 100 chips, with each player starting with a 20,000-chip stack (200 BB effective stack depth).  
\end{enumerate}  

\begin{table}[H]
\centering
\begin{tabular}{cccp{5cm}c}
\hline
\textbf{Rank} & \textbf{Hand}         & \textbf{Prob.} & \multicolumn{1}{c}{\textbf{Description}}                                                                                     & \textbf{Example}                                      \\ \hline
1             & Straight Flush       & 0.0015\%       & Five cards of consecutive rank, all of the same suit. Ties broken by highest card in the sequence.                                                                 & $\texttt{A}_s\texttt{K}_s\texttt{Q}_s\texttt{J}_s\texttt{T}_s$ \\
2             & Four of a Kind       & 0.0240\%       & Four cards of the same rank. Ties broken by the rank of the four cards.                                                                                              & $\texttt{8}_s\texttt{8}_h\texttt{8}_d\texttt{8}_c\texttt{K}_h$ \\
3             & Full House           & 0.1441\%       & Three of a kind plus a pair. Ties broken by the rank of the three of a kind, then the pair.                                                                          & $\texttt{Q}_s\texttt{Q}_h\texttt{Q}_d\texttt{5}_c\texttt{5}_s$ \\
4             & Flush                & 0.1965\%       & Five cards of the same suit, not in sequence. Ties broken by comparing the highest card, then the next highest, etc.                                                     & $\texttt{9}_h\texttt{7}_h\texttt{6}_h\texttt{4}_h\texttt{2}_h$ \\
5             & Straight             & 0.3925\%       & Five cards of consecutive rank, not all the same suit. Ties broken by highest card in the sequence.                                                                      & $\texttt{K}_s\texttt{Q}_h\texttt{J}_c\texttt{T}_d\texttt{9}_h$ \\
6             & Three of a Kind      & 2.1128\%       & Three cards of the same rank. Ties broken by the rank of the three, then the highest remaining card, then the second remaining card.                                      & $\texttt{7}_s\texttt{7}_h\texttt{7}_d\texttt{K}_c\texttt{4}_h$ \\
7             & Two Pair             & 4.7539\%       & Two different pairs. Ties broken by the rank of the higher pair, then the lower pair, then the remaining card.                                                              & $\texttt{J}_s\texttt{J}_h\texttt{5}_c\texttt{5}_s\texttt{2}_h$ \\
8             & One Pair             & 42.2569\%      & Two cards of the same rank. Ties broken by the rank of the pair, then the highest remaining card, then the next highest, and so on.                                          & $\texttt{9}_s\texttt{9}_h\texttt{A}_c\texttt{K}_s\texttt{3}_h$ \\
9             & High Card            & 50.1177\%      & No pairs or better. Ties broken by comparing the highest card, then the next highest, etc.                                                                                & $\texttt{A}_s\texttt{K}_h\texttt{J}_c\texttt{8}_d\texttt{3}_h$ \\ \hline
\end{tabular}
\caption{Hand ranks of Texas Hold'em}
\label{tbl:texas_hand_rank}
\end{table}

\subsubsection{Heads-Up Limit Texas Hold'em (HULHE)} \label{apdx:limit_texas}  

The limit variant imposes fixed constraints on bet sizing and raise frequency, distinguishing it from no-limit play:  

\begin{enumerate}  
    \item \textbf{Bet Sizing:}  
        - Preflop bets and raises are capped at the big blind amount (e.g., equivalent to the BB value).  
        - Postflop (flop, turn, river) bets and raises double to twice the big blind, maintaining consistent increments.  
    \item \textbf{Raise Limits:} Each betting round allows a maximum of one initial bet plus three raises (e.g., bet → raise → re-raise → cap raise), preventing unbounded betting sequences.    
\end{enumerate}

\subsubsection{Heads-Up No-Limit Texas Hold'em (HUNL)} \label{apdx:nolimit_texas}  

The no-limit variant removes fixed bet constraints, enabling flexible sizing and strategic depth:  

\begin{enumerate}  
    \item \textbf{Bet Sizing:} Players may bet any amount from their remaining stack ($ge$ the big blind for preflop opens), with raises requiring a minimum size equal to the previous bet.  
    \item \textbf{Raise Flexibility:} No cap on the number of raises per round, allowing complex sequences (e.g., 3-bets, 4-bets) as long as each raise meets the minimum size requirement.    
    \item \textbf{All-In Mechanic:} A player can bet their entire stack at any round, locking the hand until showdown if called.  
\end{enumerate}

\subsection{Kuhn Poker}\label{apdx:kuhn}
Kuhn Poker is a simplified two-player poker variant introduced by Kuhn in 1950, serving as a fundamental model for studying imperfect information games in game theory and artificial intelligence research. Despite its simplicity, it captures key strategic elements such as bluffing and information asymmetry.
The game proceeds as follows:
\begin{enumerate}
\item \textbf{Ante:} Each player antes 1 chip into the pot.
\item \textbf{Deck:} The deck consists of three distinct cards: a Jack (\texttt{J}), a Queen (\texttt{Q}), and a King (\texttt{K}).
\item \textbf{Private Card:} Each player is dealt one private card face down.
\item \textbf{Betting Round:} The player who acts first (determined by a fixed order, typically alternating between hands) can choose to either check or bet 1 chip. The second player responds as follows:
\begin{itemize}
\item If the first player checked, the second player may check (ending the hand with a showdown) or bet 1 chip. If the second player bets, the first player must either call (leading to a showdown) or fold (conceding the pot to the second player).
\item If the first player bet, the second player must either call (leading to a showdown) or fold (conceding the pot to the first player).
\end{itemize}
\item \textbf{Showdown:} A showdown occurs if both players check, or if a bet is followed by a call. The player with the highest-ranked card (as specified in Table \ref{tbl:kuhn_hand_rank}) wins the pot.
\end{enumerate}

\begin{table}[H]
\centering
\begin{tabular}{ccc}
\hline
Rank & Hand & Probability \\ \hline
1 & \texttt{K} & 1/3 \\
2 & \texttt{Q} & 1/3 \\
3 & \texttt{J} & 1/3 \\ \hline
\end{tabular}
\caption{Hand ranks of Kuhn Poker}
\label{tbl:kuhn_hand_rank}
\end{table}

\label{apdx:simplifiedHUNL}

\subsection{Numeral211 Hold'em}\label{apdx:numeral211}
Numeral211 Hold'em is a poker variant more complex than Kuhn Poker but significantly simpler than HULHE. With diverse hand possibilities, it serves as an ideal testbed for studying hand abstraction tasks \cite{fu2024signal}.The game proceeds as follows:

\begin{enumerate}  
    \item \textbf{Ante:} Each player antes 5 chips into the pot at the start of the hand.  

    \item \textbf{Betting Rounds:} Consists of three rounds—preflop, flop, turn—where players can act sequentially after the corresponding cards are dealt.

    \item \textbf{Deck \& Hand Composition:}
    
        - Derived from a standard 54-card deck (excluding Jokers, Kings, Queens, and Jacks), resulting in 40 cards total. Suits include spades ($s$), hearts ($h$), clubs ($c$), and diamonds ($d$), with ranks 2 through 9, 10 (\texttt{T}), and Ace (\texttt{A}).
        
        - Each player receives one private card preflop. Two community cards are revealed sequentially, with one card in flop and one card in turn, respectively. The strongest 3-card hand is formed using the private card and community cards.   

    \item \textbf{Hand Rankings:} Follow hierarchies from Straight Flush (highest) to High Card (lowest), as detailed in Table \ref{tbl:numeral211_hand_rank}.  

    \item \textbf{Betting Options:} Players may fold, check, call, bet, or raise in all rounds (similar to HULHE). Each round allows a maximum of four total bets/raises, with fixed sizes: 10 chips in preflop and 20 chips in postflop rounds (flop and turn).  

    \item \textbf{Showdown:} If neither player folds, the highest-ranked hand wins the pot; ties result in an even split.  
\end{enumerate}  

\begin{table}[H]
\centering
\begin{tabular}{cccp{5cm}c}
\hline
\textbf{Rank} & \textbf{Hand} & \textbf{Prob.} & \multicolumn{1}{c}{\textbf{Description}} & \textbf{Example} \\ \hline
1 & Straight flush & 0.321\% & Three cards with consecutive rank and same suit. Ties are broken by highest card. & $\texttt{T}_s\texttt{9}_s\texttt{8}_s$ \\
2 & Three of a kind & 1.587\% & Three cards with the same rank. Ties are broken by the card's rank. & $\texttt{T}_s\texttt{T}_h\texttt{T}_c$ \\
3 & Straight & 4.347\% & Three cards with consecutive rank. Ties are broken by the highest card rank. & $\texttt{T}_s\texttt{9}_h\texttt{8}_c$ \\
4 & Flush & 15.799\% & Three cards with the same suit. Ties are broken by the highest card rank, then second highest, then third highest. & $\texttt{T}_s\texttt{8}_s\texttt{6}_s$ \\
5 & Pair & 34.065\% & Two cards with the same rank. Ties are broken by the pair's rank, then the third card's rank. & $\texttt{T}_s\texttt{T}_h\texttt{8}_c$ \\
6 & High card & 43.881\% & None of the above. Ties are broken by comparing the highest card, then second highest, then third highest. & $\texttt{T}_s\texttt{8}_h\texttt{6}_c$ \\ \hline
\end{tabular}
\caption{Hand ranks of Numeral211 Hold'em}
\label{tbl:numeral211_hand_rank}
\end{table}

\section{Pseudocode for Poker-Specific Embedding CFR}\label{apdx:pseudocode}

Algorithm~\ref{alg:embedding-cfr} presents the pseudocode for Poker-Specific Embedding CFR (e.g., tailored for Texas Hold'em), which leverages the unique properties of poker-like games as described in Section \ref{sec:partition_infoblock_chance}. Specifically, the progression of such games can be partitioned into a sequence of public tree nodes, which depict the game process identifiable to an observer. These nodes correspond to key game events, such as the chance player dealing community cards or other players taking actions like calling, betting, or folding. We denote the set of these public tree nodes as $V$.

All stochastic factors in the game can be determined prior to traversing the entire public tree—these factors refer to the private cards dealt to players and the community cards to be revealed by the final showdown stage. Traversing the public tree with these pre-determined cards is referred to as chance sampling.

Each hand of cards, when combined with a given public tree node $v$, uniquely defines an information set. It is straightforward to confirm that there exists a one-to-one correspondence between each public tree node $v$ and an info-block $J$. We use $J(v)$ to represent the info-block determined by node $v$.

\begin{algorithm} 
 \caption{Embedding CFR algorithm for two-player hold'em games}
 \label{alg:embedding-cfr}
 \begin{algorithmic}[1]
 \Require \(\tilde{R}_{advisor}(\cdot)\): A mapping from a public node \(v \in V\) to an embedded cumulative regret tensor of size \(|A(J(v))| \times m\)
 \Require \(\sigma_{advisor}(v)\): A mapping from a public node \(v \in V\) to an embedded immediate strategy tensor of size \(|A(J(v))| \times m\)
 \Require \(\bar{\sigma}_{advisor}(v)\): A mapping from a public node \(v \in V\) to an embedded average strategy tensor of size \(|A(J(v))| \times m\)
 \For{$t\leftarrow 1$ \textbf{to} $n$}
 \State Deal $l$ batches of final private card tensors \(\mathbf{h}_1, \mathbf{h}_2\) (for players 1 and player 2, respectively) and a final community card tensor \(\mathbf{b}\)
 \State \Call{Traverse}{$v^0, \mathbf{h}_1, \mathbf{h}_2, \mathbf{b}, \mathbf{1}_l, \mathbf{1}_l$} \Comment{$v^0$: public tree root; \;$\mathbf{1}_l$: a $l$-dimensional all-ones vector} 
 \EndFor
 
 \Function{Traverse}{$v, \mathbf{h}_1, \mathbf{h}_2, \mathbf{b}, \mathbf{p}_1, \mathbf{p}_2$}
 \If{$v$ is terminal public node}
 \State [$\mathbf{u}_1$, $\mathbf{u}_2$] \(\gets \Call{GetUtility}{v, \mathbf{h}_1, \mathbf{h}_2, \mathbf{b}}\) \Comment{\textsc{GetUtility}: Compute utilities from a public node and card tensors}
 \State \Return [$\mathbf{p}_2\circ \mathbf{u}_1$, $\mathbf{p}_1 \circ \mathbf{u}_2$] \Comment{``$\circ$" denotes element-wise multiplication}
 \EndIf
 \State $\mathbf{r}_1\gets\mathbf{0}_l$; \;$\mathbf{r}_2\gets\mathbf{0}_l$; \;$\mathbf{R}\gets\mathbf{0}_m$ \Comment{zero vectors of length $l$ or $m$}
 \ForAll{$i \in \{1,2\}$}
     \If{$v$ is player $i$'s public node}
     \State $\mathbf{x}_i \gets$ \Call{HandTensor}{$v, \mathbf{h}_i, \mathbf{b}$} \Comment{\textsc{HandTensor}: Truncate card tensors to $v$'s betting round; construct $l$-batch hand tensors (see Appendix~\ref{apdx:hand-tensor})}
     \State [\(\mathbf{\Phi}_{\cdot, \{j_1, \dots, j_l\}}\), \(\_\)] \(\gets \Call{HandEbdNet}{\mathbf{x}_i}\)
     \ForAll{$a \in A(J(v))$}
         \State $\sigma_{original} \gets (\mathbf{\Phi}_{\cdot, \{j_1, \dots, j_l\}})^\top \cdot \sigma_{advisor}(v)[a]$
         \State $\mathbf{p}_i' = \mathbf{p}_i\circ \sigma_{original}$; \;$\mathbf{p}_{3-i}' = \mathbf{p}_{3-i}$
         \State [$\mathbf{r}_1'$, $\mathbf{r}_2'$] \(\gets \Call{Traverse}{\textsc{NextNode}(v,a), \mathbf{h}_1, \mathbf{h}_2, \mathbf{b}, \mathbf{p}_1', \mathbf{p}_2'}\) \Comment{\textsc{NextNode}: Next public node after executing action $a$ at $v$}
         \State $\mathbf{r}_i \gets \mathbf{r}_i+\mathbf{r}_i'\circ \sigma_{original}$; \;$\mathbf{r}_{3-i} \gets \mathbf{r}_{3-i} + \mathbf{r}_{3-i}'$
        \State $\tilde{R}_{advisor}(v)[a] \gets \frac{T-1}{T}\tilde{R}_{advisor}(v)[a] +  \frac{1}{T} \mathbf{\Phi}_{\cdot, \{j_1, \dots, j_l\}} \cdot (\mathbf{r}_i'\circ \sigma_{original})$
        \State $\mathbf{R} \gets \mathbf{R} + \Call{TensorClamp}{\tilde{R}_{advisor}(v)[a], \varepsilon, \infty}$ \Comment{\textsc{TensorClamp}: values \(< \varepsilon \to \varepsilon\), otherwise unchanged}
     \EndFor
     \ForAll{$a \in A(J(v))$}
     \State $\sigma_{advisor}(v)[a] \gets \Call{TensorClamp}{\tilde{R}_{advisor}(v)[a], \varepsilon, \infty} \backslash \mathbf{R}$ \Comment{``$\backslash$" denotes element-wise division}
     \State $\bar{\sigma}_{advisor}(v)[a] \gets \frac{T-1}{T}\bar{\sigma}_{advisor}(v)[a] + \frac{1}{T}\sigma_{advisor}(v)[a]$
     \EndFor
     \EndIf
 \EndFor
 \State \Return [$\mathbf{r}_1$, $\mathbf{r}_2$]
 \EndFunction
 \Function{HandEbdNet}{$[\mathbf{x}_q]_{q\in Q}$}
    \Comment{$[\mathbf{x}_q]_{q\in Q}$: $|Q|$-batch hand tensor; $\mathbf{x}_q$ as described in Section~\ref{sec:embedding_matrices}}
    \State \Return [\([\mathbf{\Phi}_{\cdot, q}]_{q\in Q}\), \([\mathbf{y}_q]_{q\in Q}\)] \Comment{$\mathbf{\Phi}_{\cdot, q}$ and $\mathbf{y}_q$ as described in Section~\ref{sec:embedding_matrices}}
\EndFunction

\end{algorithmic}
\end{algorithm}

\section{Theoretical Proofs}\label{apdx:proofs}

\begin{lemma}[{\cite{lanctot2009monte}}, Lemma 7] \label{lemma:sq_inequality}
For any real numbers \(a\) and \(b\), define the positive truncation operator \(a_+ \triangleq \max(a, 0)\). The following inequality holds:
\[
\left[(a + b)_+\right]^2 \leq (a_+)^2 + 2(a_+) b + b^2.
\]
\end{lemma}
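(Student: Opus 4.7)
The plan is to derive the one-step recurrence
\[
S^{T+1}_p \;\le\; \tfrac{T^2}{(T+1)^2}\,S^{T}_p \;+\; \tfrac{M}{(T+1)^2},
\qquad M \triangleq n\,|A(J)|\,\Delta_J^2\,\|\mathbf{\Phi}_{p,\cdot}\|_2^2,
\]
and then finish by comparing against the threshold $M/T$. Starting from the update $R^{T+1}(e_p,a) = \tfrac{T}{T+1}\,R^{T}(e_p,a) + \tfrac{1}{T+1}\,r^{T+1}(e_p,a)$, I would apply Lemma~\ref{lemma:sq_inequality} action-wise, taking its first argument to be $\tfrac{T}{T+1}\,R^{T}(e_p,a)$ (whose positive part is $\tfrac{T}{T+1}\,R^{T}(e_p,a)_+$ since $T/(T+1)>0$) and its second to be $\tfrac{1}{T+1}\,r^{T+1}(e_p,a)$. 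Summing over $a \in A(J)$ expresses $S^{T+1}_p$ as the desired leading term plus a cross-term $\tfrac{2T}{(T+1)^2}\sum_{a} R^{T}(e_p,a)_+\,r^{T+1}(e_p,a)$ and a quadratic remainder $\tfrac{1}{(T+1)^2}\sum_{a} \bigl(r^{T+1}(e_p,a)\bigr)^2$.

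The main obstacle will be killing the cross-term, which is where the isolated-advisor scenario is essential. Because regret matching sets $\sigma^{T+1}(e_p,\cdot) \propto R^{T}(e_p,\cdot)_+$, it suffices to prove the orthogonality $\sum_{a}\sigma^{T+1}(e_p,a)\,r^{T+1}(e_p,a) = 0$. The key identity at a single infoset is $\sum_{a}\sigma^{T+1}(I_q,a)\,r^{T+1}(I_q,a) = 0$, which follows from the linearity $v^{\sigma}(I_q) = \sum_a \sigma(I_q,a)\,v^{\sigma_{|I_q\to a}}(I_q)$ applied with $\sigma=\sigma^{T+1}$. Under the aforementioned scenario, \emph{every} $I_q \in J$ plays exactly $\sigma^{T+1}(e_p,\cdot)$, so this identity specializes to $\sum_{a}\sigma^{T+1}(e_p,a)\,r^{T+1}(I_q,a) = 0$ for each $q$; weighting by $\phi_{p,q}$, summing over $q$, and using $r^{T+1}(e_p,a) = \sum_{q}\phi_{p,q}\,r^{T+1}(I_q,a)$ (Equation~\eqref{eq:embed_cumulative_regret} in differenced form) delivers the desired cancellation. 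The degenerate case $S^{T}_p = 0$, in which $\sigma^{T+1}(e_p,\cdot)$ is defined as uniform, is trivial since then $R^{T}(e_p,a)_+ \equiv 0$ kills the sum directly.

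For the quadratic remainder, I would apply Cauchy--Schwarz to $r^{T+1}(e_p,a) = \sum_{q}\phi_{p,q}\,r^{T+1}(I_q,a)$ with vectors $(\phi_{p,q})_q$ and $(r^{T+1}(I_q,a))_q$, obtaining $\bigl(r^{T+1}(e_p,a)\bigr)^2 \le \|\mathbf{\Phi}_{p,\cdot}\|_2^2 \cdot \sum_{q}\bigl(r^{T+1}(I_q,a)\bigr)^2$. The per-infoset bound $|r^{T+1}(I_q,a)| \le \Delta_J$ is immediate from the definition of $\Delta_J$ and the convex-combination form $r^{T+1}(I_q,a) = \sum_{a'}\sigma^{T+1}(I_q,a')\bigl[v^{\sigma^{T+1}_{|I_q\to a}}(I_q) - v^{\sigma^{T+1}_{|I_q\to a'}}(I_q)\bigr]$, so $\sum_{a}\bigl(r^{T+1}(e_p,a)\bigr)^2 \le |A(J)| \cdot n\,\Delta_J^2\,\|\mathbf{\Phi}_{p,\cdot}\|_2^2 = M$. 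Plugging into the recurrence finishes the two cases: if $S^{T}_p \le M/T$ then $S^{T+1}_p \le \tfrac{T^2}{(T+1)^2}\cdot\tfrac{M}{T} + \tfrac{M}{(T+1)^2} = \tfrac{M(T+1)}{(T+1)^2} = \tfrac{M}{T+1}$; otherwise $S^{T}_p > M/T$, i.e.\ $M < T\,S^{T}_p$, which is algebraically equivalent to $\tfrac{T^2}{(T+1)^2}\,S^{T}_p + \tfrac{M}{(T+1)^2} \le \tfrac{T}{T+1}\,S^{T}_p$, yielding the second claimed bound.
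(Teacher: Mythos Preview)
Your proposal is not a proof of the stated lemma at all: the statement in question is the elementary inequality $[(a+b)_+]^2 \le (a_+)^2 + 2(a_+)b + b^2$ for real numbers $a,b$, while what you have written is a proof sketch of Proposition~\ref{thm:regret_decrease} (the regret-decrease result for an isolated advisor). You even invoke Lemma~\ref{lemma:sq_inequality} as a black box inside your argument, so you are assuming the very statement you are asked to prove.

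The paper's proof of Lemma~\ref{lemma:sq_inequality} is a three-case check depending on the sign of $a$ and of $a+b$: when $a\le 0$ one has $a_+=0$ and the right side is $b^2$; when $a\ge 0$ and $a+b\ge 0$ both sides equal $(a+b)^2$; when $a\ge 0$ and $a+b<0$ the left side is $0$ while the right side is $(a_++b)^2\ge 0$. That is all that is required here. (Your sketch, incidentally, matches the paper's proof of Proposition~\ref{thm:regret_decrease} quite closely, but that is a different task.)
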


\begin{proof}
We analyze three exhaustive cases based on the signs of \(a\) and \(a + b\):

\textbf{Case 1: \(a \leq 0\)}  
Since \(a_+ = 0\), the right-hand side simplifies to \(b^2\). Meanwhile:
\[
\left[(a + b)_+\right]^2 \leq (b_+)^2 \leq b^2,
\]
confirming the inequality.

\textbf{Case 2: \(a \geq 0\) and \(b \geq -a\)}  
Here, \(a_+ = a\) and \((a + b)_+ = a + b\). Expanding both sides:
\[
\left[(a + b)_+\right]^2 = (a + b)^2 = a^2 + 2ab + b^2,
\]
which matches the right-hand side \((a_+)^2 + 2(a_+) b + b^2\).

\textbf{Case 3: \(a \geq 0\) and \(b \leq -a\)}  
We have \((a + b)_+ = 0\), and the right-hand side:
\[
(a_+)^2 + 2(a_+) b + b^2 = [(a_+) + b]^2 \geq 0 = \left[(a + b)_+\right]^2,
\]
satisfying the inequality.

Since all cases hold, the lemma is proven.
\end{proof}

\begin{lemma} \label{lemma:cfv_factorization}
For any infoset \(I \in \mathcal{I}_i\) and strategy profile \(\sigma\), the counterfactual value of player $i$ at $I$ satisfies:
\[v_i(\sigma, I) = \sum_{a \in A(I)} \sigma_i(I, a) \cdot v_i^{\sigma_{|I \to a}}(I)\]
\end{lemma}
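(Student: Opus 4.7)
The plan is to expand both sides using the definition of the counterfactual value and match them through a partition of $Z_I$ by the action taken at $I$. Recall $v_i^\sigma(I) = \sum_{z \in Z_I} \pi_{-i}^\sigma(z[I])\,\pi^\sigma(z[I], z)\,u_i(z)$. Since each $z \in Z_I$ passes through $I$ exactly once (by perfect recall), the path from $z[I]$ to $z$ begins with a uniquely determined action $a(z) \in A(I)$ chosen by player $i = \mathcal{P}(I)$. I would therefore partition $Z_I = \bigsqcup_{a \in A(I)} Z_{I,a}$, where $Z_{I,a} = \{z \in Z_I : a(z) = a\}$.

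Next, for any $z \in Z_{I,a}$, I would factor the reach contribution at the very first edge out of $z[I]$, writing
\[
\pi^\sigma(z[I], z) \;=\; \sigma_i(I, a)\,\pi^\sigma(z[I]\cdot a, z).
\]
Substituting this into the definition of $v_i^\sigma(I)$ and pulling $\sigma_i(I,a)$ outside the inner sum gives
\[
v_i^\sigma(I) \;=\; \sum_{a \in A(I)} \sigma_i(I, a) \sum_{z \in Z_{I,a}} \pi_{-i}^\sigma(z[I])\,\pi^\sigma(z[I]\cdot a, z)\,u_i(z).
\]

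The final step is to identify the inner sum with $v_i^{\sigma_{|I \to a}}(I)$. For this I would use two observations about the modified profile $\sigma_{|I \to a}$: (i) since $\sigma_{|I \to a}$ differs from $\sigma$ only in player $i$'s action at $I$, we have $\pi_{-i}^{\sigma_{|I \to a}}(z[I]) = \pi_{-i}^\sigma(z[I])$; and (ii) by perfect recall, no continuation of any history in $I$ can revisit an infoset in $\mathcal{I}_i$ that contains $I$, so player $i$'s decisions on the path from $z[I]\cdot a$ to $z$ occur at infosets distinct from $I$, giving $\pi^{\sigma_{|I \to a}}(z[I]\cdot a, z) = \pi^\sigma(z[I]\cdot a, z)$. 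Moreover, for $z \in Z_{I,a'}$ with $a' \neq a$, the factor $\sigma_{|I \to a}(I, a') = 0$ kills the contribution, so extending the inner sum to all of $Z_I$ is harmless. This yields exactly $v_i^{\sigma_{|I \to a}}(I)$ inside the outer sum, completing the proof.

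I expect the only subtle step to be the perfect-recall argument used to justify (ii); it is otherwise a straightforward algebraic rearrangement. Everything else reduces to partitioning terminals by the action at $I$ and factoring the first edge from the reach-probability product.
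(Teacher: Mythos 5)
Your proposal is correct and follows essentially the same route as the paper's proof: partition $Z_I$ by the action taken at $I$, factor $\sigma_i(I,a)$ out of the first edge of $\pi^\sigma(z[I],z)$, and identify the inner sum with $v_i^{\sigma_{|I\to a}}(I)$. If anything, you are slightly more explicit than the paper about the perfect-recall justification for $\pi^{\sigma_{|I\to a}}(z[I]\cdot a, z)=\pi^\sigma(z[I]\cdot a, z)$ and about why the terms with $a'\neq a$ vanish when extending the inner sum to all of $Z_I$.
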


\begin{proof}
By definition, the counterfactual value is:
\[v_i^\sigma(I) = \sum_{z \in Z_I} \pi_{-i}^\sigma(z[I]) \cdot \pi^\sigma(z[I], z) \cdot u_i(z)\]
Partition \(Z_I\) by actions at $I$: \(Z_I = \bigcup_{a \in A(I)} Z_I^a\), where \(Z_I^a\) contains terminal histories passing through $I$ via action $a$. This allows rewriting the summation as:
\[v_i^\sigma(I) = \sum_{a \in A(I)} \sum_{z \in Z_I^a} \pi_{-i}^\sigma(z[I]) \cdot \pi^\sigma(z[I], z) \cdot u_i(z)\]

For \(z \in Z_I^a\), decompose the transition probability: \(\pi^\sigma(z[I], z) = \sigma_i(I, a) \cdot \pi^\sigma(z[I] \cdot a, z)\). This decomposition holds because \(z[I] \in I\), and $I$ is an infoset where player $i$ acts. Substituting this gives:
\[v_i^\sigma(I) = \sum_{a \in A(I)} \sigma_i(I, a) \cdot \sum_{z \in Z_I^a} \pi_{-i}^\sigma(z[I]) \cdot \pi^\sigma(z[I] \cdot a, z) \cdot u_i(z)\]
For the modified strategy \(\sigma_{|I \to a}\) (which fixes action $a$ at $I$), note that \(\pi^{\sigma_{|I \to a}}(z[I], z) = \pi^\sigma(z[I] \cdot a, z)\) for all \(z \in Z_I^a\). By the definition of counterfactual value under \(\sigma_{|I \to a}\):
\[v_i^{\sigma_{|I \to a}}(I) = \sum_{z \in Z_I^a} \pi_{-i}^\sigma(z[I]) \cdot \pi^{\sigma_{|I \to a}}(z[I], z) \cdot u_i(z) = \sum_{z \in Z_I^a} \pi_{-i}^\sigma(z[I]) \cdot \pi^\sigma(z[I] \cdot a, z) \cdot u_i(z)\]
Substituting this equality for the inner summation yields:
\[v_i^\sigma(I) = \sum_{a \in A(I)} \sigma_i(I, a) \cdot v_i^{\sigma_{|I \to a}}(I)\]
This completes the proof.
\end{proof}

\begin{proof}[Proof of Proposition \ref{thm:regret_decrease}]
In the proposition's description of the approximate sampling scenario for Embedding CFR, consider iteration $T$ where player $i$ happens to select the strategy \(e_p\) for interaction across all information sets \(I \in J\). 

\textbf{Key Identity: Cross-Term Vanishing}

We first note that under this scenario, the following holds:
\begin{equation}
\sum_{a \in A(J)} \left(R^T(e_p, a)_+\right) r^{T+1}(e_p, a) = 0. \label{eq:thm_regret_decrease_eq1}
\end{equation}

If \(\sum_{a \in A(J)} \left(R^T(e_p, a)_+\right) = 0\), the conclusion holds trivially. This is because \(\left(R^T(e_p, a)_+\right)\) is non-negative, and a sum of non-negative terms equals zero only if each term individually is zero.

Then, when \(\sum_{a \in A(J)} \left(R^T(e_p, a)_+\right) > 0\):
\begin{align*}
&\sum_{a\in A(J)} (R^T(e_p, a)_+) r^{T+1}(e_p, a) \\
=& \sum_{a\in A(J)} (R^T(e_p, a)_+) \sum_{q=1}^n \Phi_{p,q}r^{T+1}(I_q,a) \\
=& \sum_{a\in A(J)} (R^T(e_p, a)_+) \sum_{q=1}^n \Phi_{p,q} \left(v_i^{\sigma^T}(I_q,a) - v_i^{\sigma^T}(I_q)\right) \\
=& \sum_{a\in A(J)} (R^T(e_p, a)_+) \sum_{q=1}^n \Phi_{p,q} \left(v_i^{\sigma^T}(I_q,a) - \sum_{a' \in A(J)}\sigma^T(I_q, a') v_i^{\sigma^T}(I_q,a')\right) \text{[By Lemma \ref{lemma:cfv_factorization}]} \\
=& \sum_{a\in A(J)} (R^T(e_p, a)_+) \sum_{q=1}^n \Phi_{p,q} \left(v_i^{\sigma^T}(I_q,a) - \sum_{a' \in A(J)}\sigma^T(e_p, a') v_i^{\sigma^T}(I_q,a')\right) \quad \text{[Since all infosets } I_q \in J \text{ use advisor } e_p\text{'s strategy]} \\
=& \sum_{a\in A(J)} (R^T(e_p, a)_+) \sum_{q=1}^n \Phi_{p,q} \left(v_i^{\sigma^T}(I_q,a) - \sum_{a' \in A(J)}\frac{R^T(e_p,a')_+}{\sum_{a''\in A(J)}(R^T(e_p,a'')_+)} v_i^{\sigma^T}(I_q,a')\right) \quad \text{[By regret matching ( Equation \eqref{eq:regret_matching_embedding})]} \\
=& \sum_{a\in A(J)}  \sum_{q=1}^n \Phi_{p,q} (R^T(e_p, a)_+) v_i^{\sigma^T}(I_q,a) - \sum_{q=1}^n\sum_{a' \in A(J)}\Phi_{p,q}(R^T(e_p,a')_+) v_i^{\sigma^T}(I_q,a') \\
=& 0
\end{align*}

\textbf{Bounding the Instantaneous Regret}

To characterize the regret reduction, we then bound \((r^{T+1}(e_p, a))^2\) using the Cauchy-Schwarz inequality:

\begin{align}(r^\sigma(e_p, a))^2&= \left( \mathbf{\Phi}_{p,\cdot} r^\sigma(\mathbf{J},a) \right)^2 \nonumber \\
&\leq \|\mathbf{\Phi}_{p,\cdot}\|_2^2 \cdot \|r^\sigma(\mathbf{J},a)\|_2^2 \quad \text{(by Cauchy-Schwarz inequality)} \nonumber \\
&= \|\mathbf{\Phi}_{p,\cdot}\|_2^2  \sum_{q=1}^n \left( v^{\sigma_{|I_q\to a}}(I_q) - v^\sigma(I_q) \right)^2 \nonumber \\
&\leq \|\mathbf{\Phi}_{p,\cdot}\|_2^2 \cdot \sum_{q=1}^n \Delta_J^2 \quad \text{(since \(|v^{\sigma_{|I_q\to a}}(I_q) - v^\sigma(I_q)| \leq \Delta_J\) for all $q$)} \nonumber \\
&= n \Delta_J^2 \cdot \|\mathbf{\Phi}_{p,\cdot}\|_2^2. \label{eq:specific_regret_bound}
\end{align}

\textbf{Bounding the Instantaneous Regret}

Define \(S^T = \sum_{a\in A(J)} \left(R^T(e_p, a)_+\right)^2\). We have:

\begin{align}
&S^{T+1} = \sum_{a\in A(J)} (R^{T+1}(e_p,a)_+)^2 \nonumber \\
=& \sum_{a\in A(J)} \left[ \left( \frac{T}{T+1} R^{T}(e_p,a) + \frac{1}{T+1} r^{T+1}(e_p,a) \right)_+ \right]^2 \nonumber \\
\leq& \frac{T^2}{(T+1)^2}  \sum_{a\in A(J)} \left(R^{T}(e_p,a)_+ \right)^2 + \frac{1}{T^2}\sum_{a\in A(J)}\left(  r^{T+1}(e_p,a) \right)^2 +  \frac{2T}{(T+1)^2}\overbrace{ \sum_{a\in A(J)}(R^{T}(e_p,a)_+) r^{T+1}(e_p,a)}^{\mathclap{\text{equals zero by Equation \eqref{eq:thm_regret_decrease_eq1}}}} \quad \text{(by Lemma \ref{lemma:sq_inequality})} \nonumber \\
=& \frac{T^2}{(T+1)^2} S^T + \frac{1}{(T+1)^2} \sum_{a\in A(J)} \left(r^{T+1}(e_p,a)\right)^2. \label{eq:thm_regret_decrease_eq2}
\end{align}

By Equation \eqref{eq:specific_regret_bound}, \(\sum_{a\in A(J)} \left(r^{T+1}(e_p,a)\right)^2 \leq |A(J)| \cdot n \Delta_J^2 \cdot \|\mathbf{\Phi}_{p,\cdot}\|_2^2 \triangleq C\), where $C$ is a constant. Inequality \eqref{eq:thm_regret_decrease_eq2} simplifies to:
\begin{equation}
S^{T+1} \leq \frac{T^2}{(T+1)^2}\cdot S^T + \frac{C}{(T+1)^2}
\end{equation}

We now analyze two scenarios for \(S^T\):

\textbf{1. Small \(S^T\): Controlled Polynomial Decay}\\
If \(S^T \leq \frac{C}{T}\), substituting into the inequality gives:
\[S^{T+1} \leq \frac{T^2}{(T+1)^2} \cdot \frac{C}{T} + \frac{C}{(T+1)^2} = \frac{T \cdot C + C}{(T+1)^2} = \frac{C}{T+1}.\]
This shows small \(S^T\) propagate to \(S^{T+1} \leq \frac{C}{T+1}\), with decay strictly controlled by the bound \(O\left(\frac{1}{T}\right)\).

\textbf{2. Large \(S^T\): Uncontrolled Polynomial Decay (Fractional Scaling)}\\
If \(S^T > \frac{C}{T}\), the dominant term drives a faster reduction:
\begin{align*}
    S^{T+1} &\leq \frac{T^2}{(T+1)^2} S^T + \frac{C}{(T+1)^2} \\
    &\leq \frac{T^2}{(T+1)^2} S^T + \frac{T\cdot S^T}{(T+1)^2}\\
    &=\frac{T}{T+1}S^T
\end{align*}

This means that when \(S^T > \frac{C}{T}\) holds, no explicit bound can be established for \(S^{T+1}\). Even so, it still guarantees a regret-decreasing process. Specifically, the magnitude of the decrease is proportional to \(S^T\) itself, yet decays with iteration.
\end{proof}

\section{Detailed Implementation of HandEbdNet in Numeral211 Hold'em}\label{apdx:handebdnet_implementation}

\subsection{The Construction of the Hand Tensor $\mathbf{x}_q$} \label{apdx:hand-tensor}

Numeral211 Hold'em, a 3-round poker game with 4 suits and 10 ranks, represents hands in the \(s\)-th round as \([4,10, s]\) image-like tensors (see Figure \ref{fig:hand_tensor}). Each channel is a \(10 \times s\) map encoding one suit's cards across rounds, facilitating independent analysis of straight flushes and flushes.

Preprocessing ensures isomorphic hands share identical tensors. Channels are ordered by suit importance: first by card count (more means a higher importance), with lexicographical order breaking ties (e.g., a suit containing an Ace in the first round and a 9 in the second round is considered more important than one with a 10 in the first round and a 0 in the second round). This uniquely distinguishes all suits in Numeral211 Hold'em.

The Figure \ref{fig:hand_tensor} illustrates examples of such tensor construction. Specifically, the hands \([\texttt{A}_c^p\texttt{T}_c^p, \texttt{9}_d, \texttt{2}_c]\) and \([\texttt{A}_h^p\texttt{T}_h^p, \texttt{9}_s, \texttt{2}_h]\) share an identical tensor representation, and their predecessors in the second round are also identical.

\begin{figure}[H]
\centering
\includegraphics[width=0.4\columnwidth]{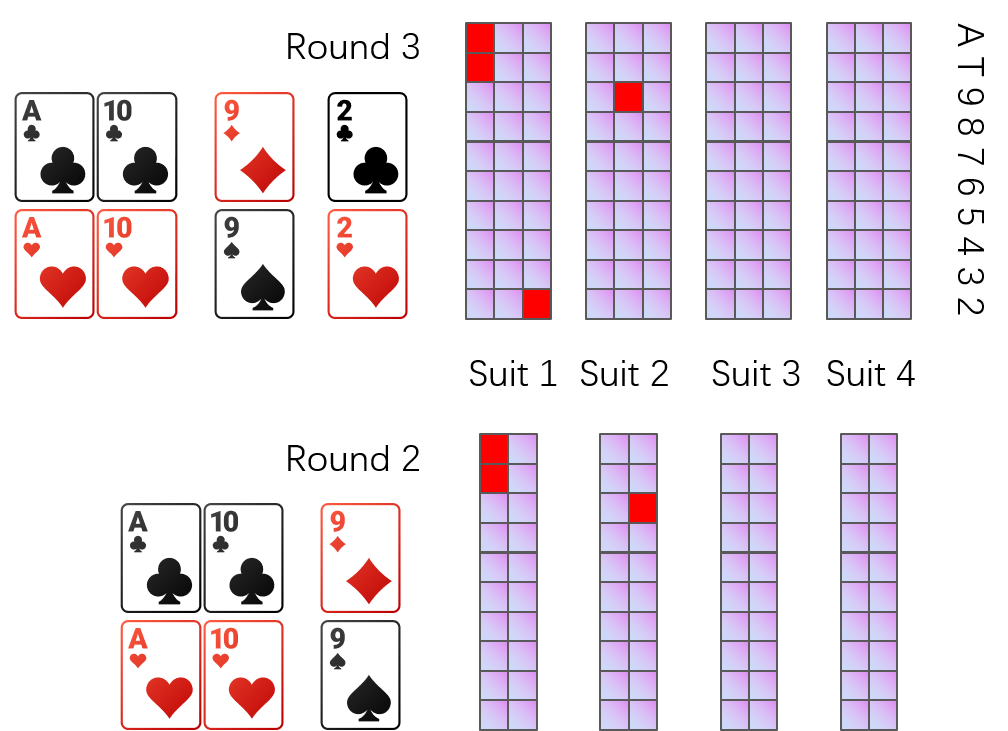} 
\caption{Hand Tensor Examples in Numeral211 Hold'em.}
\label{fig:hand_tensor}
\end{figure}

This approach is equally applicable to more complex Texas Hold'em games.

\subsection{The Construction of the Hand Strength Tensor $\mathbf{y}_q$}

Let's formalize the construction process of the hand strength tensor $\mathbf{y}$ of a hand $x$. We largely draw on the ideas by \citet{fu2024signal} to record the strengths of $x$ and its predecessors in their respective rounds.

Suppose a poker game consists of a total of $S$ rounds ($S=3$ in Numeral211 Hold'em). We construct the hand strength tensor for a hand $x$ at round $s$ through the following systematic process:

\textbf{For the terminal round \(S\)}: Hands \(x\) in poker games naturally form a total order at terminal rounds. We directly compare its strength against all other valid opponent hands \(x'\) that share the same community cards at this round. This comparison yields a 3-length outcome vector \(\mathbf{w} = [w_l, w_d, w_w]\), where:
\begin{itemize}
    \item \(w_l = \frac{\text{Number of } x' \text{ where } x \text{ loses to } x'}{\text{Total number of } x'}\)
    \item \(w_d = \frac{\text{Number of } x' \text{ where } x \text{ draws with } x'}{\text{Total number of } x'}\)
    \item \(w_w = \frac{\text{Number of } x' \text{ where } x \text{ wins against } x'}{\text{Total number of } x'}\)
\end{itemize}
These values are inherently normalized such that \(w_l + w_d + w_w = 1\).

In Numeral211 Hold'em, for example, during the terminal round, each player holds one private card, and there is one community card on the table. For a specific hand \(x\) (composed of the player's private card and the community card), the comparison set includes all other possible hands formed by pairing each of the remaining 38 private cards with the same community card (i.e., \(x'\)). The vector \(\mathbf{w}\) is thus calculated based on how \(x\) performs against these 38 potential opponent hands.

\textbf{For a non-terminal round \(s\)}: Since direct strength comparison between hands $x$ is not feasible at non-terminal rounds, we employ a rollout process: each hand $x$ at round \(s\) is rolled out to the terminal round \(S\). For each rollout scenario $x\sqsubset x_k$ in $S$, we obtain a 3-length outcome vector \(\mathbf{w}_k\) following the terminal round comparison method. The final outcome vector for round \(s\) is calculated as a weighted sum \(\mathbf{w} = \sum_{k=1}^K \alpha_k \mathbf{w}_k\), where \(\alpha_k\) represents the probability weight of the \(k\)-th rollout path, and \(\sum_{k=1}^K \alpha_k = 1\).

For a hand \(x\) evaluated at round \(s\), we compute the 3-length outcome vectors for each round \(1, 2, \ldots, s\) using the above methods applied to the predecessors of $x$. The hand strength tensor \(\mathbf{y}_q \in \mathbb{R}^{s \times 3}\) is then formed by stacking these vectors vertically, where each row corresponds to the outcome vector of a specific round, resulting in a 2-dimensional tensor with dimensions \([s, 3]\). This tensor provides a comprehensive strength profile across all relevant game rounds.

\subsection{Structure of the Hand Feature Encoder}

The hand feature encoder (HFE) is a simple structure. Since we model a hand as an image-like tensor, the HandEbdNet--essentially a regression task mapping \(\mathbf{x}_q\) to \(\mathbf{y}_q\)--adopts a CNN architecture to identify hand patterns (see Figure \ref{fig:hand_feature_encoder}). Logically, HFE comprises a convolutional layer followed by a fully connected layer.

\begin{figure}[H]
\centering
\includegraphics[width=0.9\columnwidth]{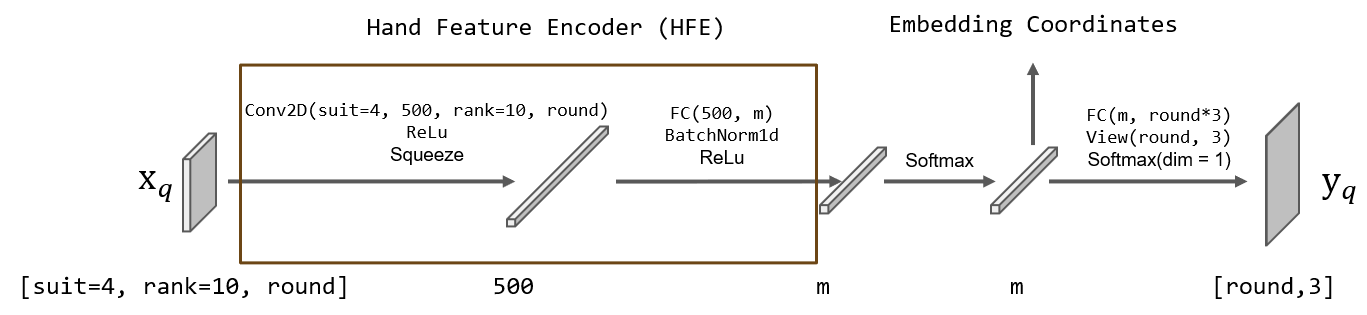} 
\caption{Structure of the HandEbdNet.}
\label{fig:hand_feature_encoder}
\end{figure}

Notably, each convolution kernel matches the size of a single channel, mapping one channel's data to a single value. This enables kernels to recognize suit-specific patterns (e.g., straight flushes, flushes). After ReLU activation and squeezing, a 500-length intermediate vector (hyperparameter) is produced. The subsequent fully connected layer then generates the target m-length vector, while capturing cross-suit hand strengths like pairs and straights.

\subsection{Training and Inference}

We train a separate network for each round, considering only Round 2 and Round 3. As shown in Figure \ref{fig:hand_feature_encoder}, we need to specify the ``round" parameter. The training data consists of all possible hand combinations in the current round: for Round 2, the number of combinations is \(\binom{40}{2} \times \binom{38}{1} = 780 \times 38 = 29,\!640\); for Round 3, it is \(\binom{40}{2} \times \binom{38}{1} \times \binom{37}{1} = 1,\!096,\!680\). Such data volumes are easily traversable and trainable on a home PC. Since the same data is used during inference, we do not need to consider issues of network robustness or overfitting--this is essentially a simple regression task, which is why a basic network structure suffices to solve this task.

\section{Experiment}\label{apdx:experiment}

\begin{figure}[t]
\centering
\includegraphics[width=0.9\columnwidth]{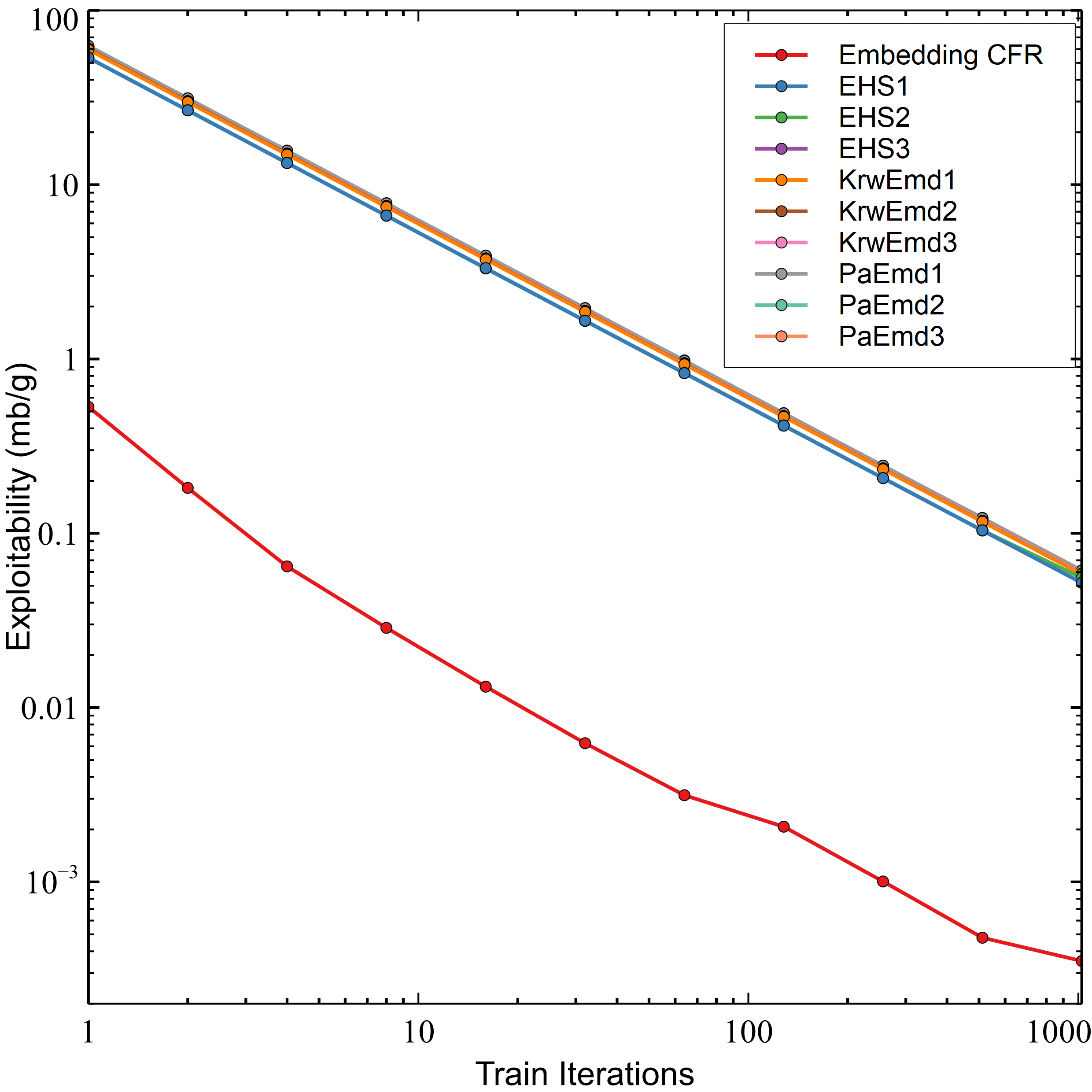} 
\caption{Complete Experimental Data}
\label{fig:Complete_experiment}
\end{figure}

For the EHS and PaEmd algorithms, there are no parameters to set; the only factor that can lead to differences is the initial clustering algorithm. For each algorithm, we generate three initial clusterings and iteratively compute the final clustering results.

For the KrwEmd algorithm, it requires setting an importance hyperparameter, which simply refers to the significance of the winning rate distance in each round. We select three sets of parameters: KrwEmd1, where the winning rate differences in late game are more important; KrwEmd2, where the winning rate differences in early game are more important; and KrwEmd3, where the winning rate differences in all rounds are equally important. One clustering result is generated for each set.

The complete experimental data are shown in the Figure \ref{fig:Complete_experiment}. It can be observed that in long-term experiments, the differences in exploitability among the various clustering-based methods are insignificant, yet there exists an essential gap when compared with EmbeddingCFR in terms of exploitability.

Our experiments were conducted on a device equipped with 754GB of memory, an Intel(R) Xeon(R) Gold 6240R CPU (2.40GHz base frequency, up to 4.00GHz, 96 threads), and 7 NVIDIA TITAN RTX GPUs (each with 24GB memory). Each individual experiment takes approximately 34.2 days (with 32 threads).

\end{document}